\newcommand{\vast}{\bBigg@{4}}
\newcommand{\Vast}{\bBigg@{5}}
\newcommand*{\defeq}{\mathrel{\rlap{%
			\raisebox{0.3ex}{$\m@th\cdot$}}%
		\raisebox{-0.3ex}{$\m@th\cdot$}}%
	=}
\newcommand{\zzz}[1]{\textcolor{black}{#1}}
\def\thmheadbrackets#1#2#3{%
	\thmname{#1}\thmnumber{\@ifnotempty{#1}{ }\@upn{#2}}%
	\thmnote{ {\the\thm@notefont[#3]}}}
\newtheoremstyle{brakets}
{}
{}
{\normalfont}
{}
{\bfseries}
{.}
{ }
{\thmheadbrackets{#1}{#2}{#3}}
\newtheoremstyle{defbrakets}
{}
{}
{\normalfont}
{}
{\bfseries}
{.}
{ }
{\thmheadbrackets{#1}{#2}{#3}}
\newtheoremstyle{defproblem}
{}
{}
{\normalfont}
{}
{\bfseries}
{.}
{ }
{\thmheadbrackets{#1}{#2}{#3}}
\theoremstyle{brakets}
\newtheorem{thm}{Theorem}
\theoremstyle{defbrakets}
\newtheorem{cor}{Corollary}
\newtheorem{lem}[thm]{Lemma}
\newtheorem{defn}[thm]{Definition}
\newtheorem{prop}{Proposition}
\theoremstyle{defproblem}
\newtheorem{plm}[thm]{Problem}
\newtheorem{rem}[thm]{Remark}
\newcommand{\algrule}[1][.2pt]{\par\vskip.5\baselineskip\hrule height #1\par\vskip.5\baselineskip}
\let\oldnl\nl
\newcommand{\nonl}{\renewcommand{\nl}{\let\nl\oldnl}}
\DeclarePairedDelimiter{\norm}{\lVert}{\rVert}
\newcommand{\xx}[1]{\textcolor{black}{#1}}
\newcommand{\yy}[1]{\textcolor{black}{#1}}
\newcommand{\qq}[1]{\textcolor{black}{#1}}
\begin{document}

	\title{Topology Recoverability Prediction for Ad-hoc Robot Networks: A Data-Driven Fault-Tolerant Approach}
\author{Matin~Macktoobian, Zhan, Shu, and Qing Zhao\footnote{matin.macktoobian@ualberta.ca}}%
\date{Electrical and Computer Engineering Department\\ University of Alberta\\Edmonton, AB, Canada}
\maketitle

\begin{textblock}{14}(2,1)
	\noindent\textbf{\color{red}Published in 
		``IEEE Transactions on Signal and Information Processing over Networks'' \\DOI: 10.1109/TSIPN.2023.3328275}
\end{textblock}

\begin{abstract}
	Faults occurring in ad-hoc robot networks may fatally perturb their topologies leading to disconnection of subsets of those networks. Optimal topology synthesis is generally resource-intensive and time-consuming to be done in real time for large ad-hoc robot networks. One should only perform topology re-computations if the probability of topology recoverability after the occurrence of any fault surpasses that of its irrecoverability. We formulate this problem as a binary classification problem. Then, we develop a two-pathway data-driven model based on Bayesian Gaussian mixture models that predicts the solution to a typical problem by two different pre-fault and post-fault prediction pathways. The results, obtained by the integration of the predictions of those pathways, clearly indicate the success of our model in solving the topology (ir)recoverability prediction problem compared to the best of current strategies found in the literature.
\end{abstract}

\textbf{keywords}: Topology Prediction, Fault-Tolerant Prediction, Ad-hoc Robot Networks\xx{ , Bayesian G\yy{au}ssin Mixture Model, Mobile Robotics}

\maketitle
\doublespacing
\section{Introduction}
Ad-hoc robot networks provide communicational dexterity\footnote{\zzz{Communication dexterity refers to the feasibility of communication in extreme environments cluttered with objects, which may lead to collision faults, and/or in the presence of massive loads of information to be passed through an ad-hoc robot network prone to congestion.}} as well as mobility for many applications such as surveillance \cite{ghedini2018toward}, wireless sensor networks \cite{li2012servicing}, rescue missions \cite{birk2009networking}, aerial transportation \cite{bernard2011autonomous}, and so on. In such multi-robot systems, every pair of robots shall be able to communicate with each other. Thus, a minimum set of links, known as topology, has to be found to suffice the cited connectivity. Topology preservation is assumed to be equivalent with communication preservation in an ad-hoc robot network, as long as its topology is designed based on some constraints that impact communication at a protocol level \cite{chen2019topology}. Given a particular topology, wireless communicational links between nodes of ad-hoc robot networks are heterogeneously subject to noise. The more those disturbances are applied to a link, the less its quality is. Link quality is a major concern in topology construction and recoverability \cite{tardioli2010enforcing}, so that one can impose different constraints \cite{santra2013study} corresponding to shape, length, fan-in/fan-out, etc., on a link selection process to achieve various optimal topology criteria with respect to those constraints.

In the case of some favorite topology geometries\footnote{The geometry of a topology refers to the shape of the graph associated with that topology.}, especially star-shaped ones, the analytical computation of an optimal topology\footnote{\zzz{Optimal topology is a topology that fully satisfies a set of congestion-driven constraints for an ad-hoc robot network or a computer network \cite{macktoobian2023learning}.}} cannot be efficiently scaled with respect to the size of a robot network. In other words, the complexity of finding an analytical solution to an optimal topology problem\footnote{We follow the definition of optimal topology as a class of cycle topologies having the largest possible cycle, with respect to some connectivity constraints, as the backbone, and a set of tree-like branches \cite{macktoobian2023learning}.} of a robot network is equivalent to the NP-hard problem of checking whether the graph of that network is Hamiltonian \cite{mavrogiannis2021hamiltonian}. The quoted computational inefficiency may frequently occur in ad-hoc robot networks because they often constitute many robots. The more robots a network possesses, the more robust it will be in terms of undoing potential fault impacts by their robot redundancies, and in terms of the recoverability of the graph structure associated with its topology after the occurrence of a fault\footnote{\zzz{In this paper, faults refer to either collision or congestion scenarios as a result of which a robot \xx{is} disconnected from the topology of its network.}}. Moreover, larger networks are more capable of performing complicated missions. \xx{However}, optimal topology planning for ad-hoc robot networks cannot be analytically conducted. Even in the case of relatively small networks, including up to 10 robots, given a limited set of optimality criteria, the required analytical computations are extremely demanding and resource-intensive because the mobility of robots requires many iterations of topology computations once they move in their environment. In some applications, one may alleviate such computational bottlenecks at the cost of yielding sub-optimal topologies \cite{ghosh2022cognitive}. However, that can be only possible if a network is totally observable in the course of its operations, which is not a realistic assumption in many scenarios that are entangled with uncertainty and safety concerns. 

\yy{Collisions and congestions may be due to inherent incapabilities of a network's robots to deal with critical coordination and communication scenarios, which can jeopardize their topologies. Another source of topological issues associated with an ad-hoc robot network can be security-driven intrusions \cite{zhang2000intrusion,sarika2016security} according to which an intruding entity poses malicious signals to dissociate their topology \cite{ponnusamy2022intrusion}. These issues are well-studied in the case of weighted graphs using XTC model \cite{wattenhofer2004xtc}. Another known effective strategy is the usage of multi-phase topologies, e.g., \cite{shen2004cltc,bilen2022three}, in which a working topology can be re-spawned by multiple robots in the case of a particular type of intrusion.}

Alternatively, one may take the data associated with former topologies of an ad-hoc robot network to synthesize a machine-learning-based semi-optimal topology which resembles optimal ones with very high accuracies \cite{macktoobian2023learning,zhou2020new}. According to this data-driven point of view, one has to select a feature set of an ad-hoc robot network that can be attributed to an optimal topology via a particular set of constraints and requirements. Many instances of those feature-topology correspondences have to be collected in a dataset. Then, supervised classification methods can be applied to such datasets to achieve models for the prediction of optimal topologies associated with unseen feature sets of that network.

The aforesaid data-driven methods are remarkably more efficient than analytical solutions. However, limited computational resources of an ad-hoc robot network have to be partially dedicated to the real-time employment of these data-driven strategies. One of the prominent situations in which topology re-computations may be necessary is where the structure of an ad-hoc robot network significantly changes because of faults. \xx{In an ad-hoc robot network, one may observe two types of faults, in the course of a network's operations, can lead to fatal impacts on the topology of the network, thereby interfering with its desired communicational characteristics. The first type of fault stems from collisions between robots operating in each others' vicinities. }In these scenarios, a subset of robots may face outage due to fatal faults. In this regard, those outages may jeopardize the validity of the topology of such an ad-hoc robot network and its overall connectivity. In a naive-but-inefficient strategy, one may perform the quoted machine-learning-based predictive computations to yield another working topology. However, the removal of some robots from an ad-hoc robot network may fatally interfere with the connectivity of other robots so that no topology may preserve their connectivity. In such situations, running any topology prediction and/or computation procedure wastes computational resources. So, topology recoverability of a network after a fault occurrence is essential to limit intensive topology re-computations solely to the scenarios in which it is possible to recover the topology via finding an alternative one. In this paper, we seek an accurate computationally-efficient data-driven solution to the following problem.
\begin{plm}
	\label{plm}
	Given an algorithm, associated with a set of
	optimality criteria, to compute the optimal topology corresponding
	to an ad-hoc robot network, assume that a fault
	occurs turning a subset of the robots communicationally dysfunctional. Then, determine whether or not the remaining
	robots can construct a new topology to preserve the connectivity
	of their robot network.
\end{plm}
\yy{The solution to this problem is particularly crucial in the control of multi-robot systems for operations in extreme environments such as space operations \cite{leitner2009multi}, search and rescue \cite{queralta2020collaborative}, extinguishing large-scale wildfire \cite{couceiro2019semfire}, etc. In these applications, due to severe environmental conditions, robots may face outage, but	the assessment of the impact of such outages on remaining robots is vital to properly synthesize control signals for resolving those situations. Thus, because of the complexity of such environments and lack of full observability between every pair of robots, any movement of a robot has to be subject to the preservation of communication between the remainder of robots. This paper exhibits a data-driven solution to this problem. The drawbacks of our strategy, as motivation for future research in this field, are briefly discussed in Section \ref{sec:conc}.}

To solve this binary classification problem, we propose a framework in which topologies are Gaussian embeddings of robot coordinates. We obtain those embeddings by using bi-variate Bayesian Gaussian mixture models (B-BGMMs)\footnote{The foundations of B-BGMMs, as far as required to understand this paper's contribution, are reviewed in Section \ref{subsec:BGMM}.}. The resulting probability density functions (PDFs) may be immediately used for Bayesian inference regarding faults and their impacts on network topologies. The predictive assessments of our method are essentially performed in the scope of robot neighborhoods which exhibits the distributed approach of ours regarding communications among robots. For this purpose, the proposed framework constitutes two separate recoverability assessment pathways associated with faults. The first pathway performs pre-fault prediction which takes the nominal network, as a whole, into account before the occurrence of any fault to generate B-BGMM-based PDFs. Then, given some particular faults occurred in some robots, we develop a Bayesian inference engine to predict the connectivity probability of the post-fault network in the absence of those robots. The second pathway assesses the same target in a post-fault perspective. Namely, PDFs are generated only for the robots that were immediately connected to faulty robots. Then, a decision-maker algorithm combines the assessments of these two pathways to classify the problem to be either recoverable or irrecoverable. If the model votes for the topology recoverability in a post-fault scenario, one may run an efficient data-driven topology synthesizer, for example, OpTopNET \cite{macktoobian2023learning}, to achieve a new topology. Otherwise, further topology computations are avoided until robot coordinates are significantly changed. In that case, the dataset content may no longer be associated with the new coordinates of robots, and one may reach a new valid topology via direct computations. 

We also highlight the point that our learning approach has to be online because of unpredictable dynamics of robots in an ad-hoc robot network in the course of their missions, e.g., in the case of teams of firefighting robots or those used in search applications. Additionally, in our formulation, any factor that jeopardizes the topology of an ad-hoc robot network is considered as a fault, such as the separation of a robot from its peers at distances farther than the communication threshold range of their network. 

The reader may note that the existence of an optimal topology with respect to a post-fault network is equivalent to the recoverability of its corresponding pre-fault topology. \xx{Thus}, the feasibility analysis of optimal topology seeking is an automatic result of our method. In this paper, we assume that faults are captured by some background mechanism \cite{wang2019hybrid}. Our proposed solution is in fact a post-processing procedure applicable to an ad-hoc robot network, should the set of its communicationally-irresponsive robots \xx{be} known. 

\zzz{The proposed strategy is centralized because in fault-sensitive scenarios, especially those including 10 to 20 robots similarly to what our approach is applicable to, the safety of centralized strategies established by data usage may be better experimentally verified via simulations compared to decentralized ones. That is because in the latter case, communication interfaces of decentralized predictors have to be separately investigated for safety purposes, which is not a trivial task. \xx{Another complication stemming from decentralization of this approach is the synchronization of predictions corresponding to all robots. In a decentralized setting, synchronization is necessary to assure that all robots predict the local topologies of their neighborhoods simultaneously based on the most updated positions of their peers. On the other hand, packets of data should be as small as possible in ad-hoc robot networks to minimize both the communication delay and processing labor of robot processors. Since any synchronization mechanism essentially increases the size of packets, one has to carefully design efficient schemes to minimize the quoted burden. }\xx{It means} a central entity is required to constantly collect data of robots for any topology (ir)recoverability prediction. This entity may be a communications protocol \cite{trenkwalder2020swarmcom}. The assimilation of data is equivalent to the creation and maintenance of a dataset that is done by the centralized entity described above. Additionally, our method assumes that robot coordinates are available as inputs. So, the method is independent of any data collection strategy as long as coordinates can be captured. In particular, depending on the size of robots and their computational strength, one may use a variety of coordinate collection schemes such as dead reckoning \cite{brossard2020ai} (for small robots) or SLAM \cite{kegeleirs2021swarm} (for larger robots equipped with more powerful computational resources).}
\subsection{Related Work}
A sizeable literature has emerged in the field of fault recoverability prediction and prognosis in recent years. For example, hidden Poisson Markov models \cite{prasanth2021certain} were used for underwater wireless sensor networks. These networks are static without any autonomous mobility capabilities. \xx{Thus}, the resulting single-shot prediction problem may not pose severe computational complexities contrary to the problem of ad-hoc robot networks. An interesting application of dynamic Bayesian networks is the fault prognosis of planetary rovers \cite{codetta2014dynamic}. Despite the expressivity of this approach in terms of finding fatal modes, its high-level formalism is mostly useful for systems engineering purposes to validate and verify safety in such systems. Localization of fault prediction was also proposed to improve the resilience of underwater robot networks against fatal malfunction \cite{das2017fault}. The family of computational intelligence methods, e.g., particle swarm systems, has been also taken into account to predict irregularities regarding various features of ad-hoc robot networks \cite{harold2019psoblap}. These methods rely on the assumption that dynamics of their desired features are known to a certain extent. Despite the prevalence of the cited strategies, the direct usage of data associated with ad-hoc robot networks for topology (ir)recoverability prediction in the presence of faults have been rarely taken into account.

Machine-learning-based techniques, e.g., long short-term memories \cite{wu2020approach} and dynamic wavelet neural networks \cite{jin2018fault}, are also well-established tools to track topology (ir)recoverability of faults. However, they have been exclusively applied to health monitoring data or similar classes of data exhibiting extreme cross-correlations between their features. Fault recoverability prediction for wind turbines has been studied using deep networks \cite{xu2020fault}. \xx{However}, the efficiency of these models applied to multi-agent cases, such as ad-hoc robot networks, is unclear. In particular, these models usually deal with limited sets of features, while the feature number for ad-hoc robot networks is often large, e.g., proportional to the cardinality of their populations.

Attention-based approaches, e.g., \cite{liu2019attention}, and adversarial methods, e.g., \cite{yang2021fault}, were also used for fault recoverability prediction of industrial robots. However, they fundamentally assume that faults are not fatal which is not generally the case in ad-hoc robot networks. \xx{The embodiment of the attention mechanism to echo state networks led to a powerful predictor which is relatively scalable with respect to the size of a network \cite{moore2001clustering}. Multiple linear regression was a promising strategy to employ attention for fault detection processing \cite{roberts1998bayesian} and prognosis \cite{mosallam2016data}.} BGMMs have been already used for sensor networks \cite{safarinejadian2010distributed}, self-organizing maps \cite{yin2001bayesian}, signal processing \cite{plataniotis2017gaussian}, anomaly detection in hyper-spectral imagery \cite{wang2021review}, robotic policy imitation \cite{pignat2019bayesian}, fault diagnosis of rotary machines \cite{li2021convolutional}, etc. The formulation of \cite{yin2001bayesian} particularly resembles ours in view of the confinement of feature sets to minimal ones.

\subsection{Contributions}
The contributions of this work, characterizing our solution to Problem \ref{plm}, are highlighted as follows.
\begin{enumerate}
	\item Due to the correspondence between robot coordinates in an ad-hoc robot network and its optimal topology \cite{macktoobian2023learning}, our model only takes those coordinates as its input. This minimizes the set of features generated by B-BGMMs yielding higher accuracies and faster convergences. 
	\item Our proposed solution is independent of any fault-detection strategy\xx{, according to which} one may deploy any fault detector subsystem, should it determine those robots in which faults are occurred. 
	\item Our decision-making procedure includes a minimum set of hyperparameters, all of which are both topologically-intuitive and semantically-relative to hyperparameters of topologies. Thus, one can even efficiently set them without any particular tuning process to achieve high classification accuracies.
	\item Because of the probabilistic nature of our algorithm, it can inherently deal with partial observabilities and stochastic nature of robot coordinates. 
	\item Collisions and congestions, as the major sources of faults in ad-hoc robot networks \cite{ramanathan2002brief}, are fully taken into account in the Bayesian inference engine of our scheme.
	\item The differential kernel of the decision-making process of our model, supported by our double-pathway classification mechanism, yields up to 90\% classification accuracy.
\end{enumerate}
\subsection{Organization}
The remainder of this article is structured as below. Section \ref{sec:prel} covers preliminary concepts relevant to the proposed solution. To be specific, Section \ref{subsec:corres} reviews the known correspondence between robot coordinates and the optimal topology of their network. Given the results of this section, we later embed Gaussian transformations of coordinates into B-BGMMs models. A brief formalism of B-BGMMs is presented in Section \ref{subsec:BGMM}. Section \ref{sec:meth} rigorously describes our methodology. For this purpose, we first state the derivation of probabilities based on PDFs of our B-BGMMs. Then, Section\xx{s} \ref{subsec:pre} and \ref{subsec:post} explain the machinery of the pre-fault and post-fault prediction pathways of our model, respectively, while Section \ref{subsec:alg} outlines how results of the cited pathways are integrated into a heuristic algorithm to yield classification results. The results exhibiting the efficiency of our method are represented in Section \ref{sec:res}. We draw our conclusions in Section \ref{sec:conc}.
\section{Preliminaries}
\label{sec:prel}
\subsection{Topology-Coordinate Correspondence}
\label{subsec:corres}
A topology for an ad-hoc robot network comprises a minimum set of links between robots of that network such that every robot can (in)directly communicate with other robots. It is known that one may establish a many-to-one correspondence between robot coordinates in a network, with respect to a set of ad-hoc communications criteria, to a particular topology that optimizes communications among those robots \cite{macktoobian2023learning}. Mathematically speaking, a topology $\mathcal{T}$ can be computed given a robot coordinate set, say, $\mathcal{X} := \{\mathcal{X}_{i} \mid i\in\mathcal{I} \}$, where $\mathcal{I}$ is an index set, via a mapping $\mathcal{T} = f(\mathcal{X})$.

Coordinate set $\mathcal{X}$ is often known when an ad-hoc robot network is deployed. However, it is not necessarily the case when the network may be subject to faults and partial observability. Thus, we consider coordinate set $\mathcal{X}$ as a set of random variables to efficiently take such dynamics into account.
Here, we present some general definitions of the essential notions that are required in the course of representing our approach.
\begin{defn}[Connectivity Threshold]
	Suppose two robots whose coordinates are specified by random variables $\mathcal{X}_{i}$ and $\mathcal{X}_{j}$. Then, given a connectivity threshold $\delta$, they can be directly linked to each other in a topology if the following condition holds\footnote{Operator $\norm{\cdot}$ represents the Euclidean norm.}: $\norm{\mathcal{X}_{i} - \mathcal{X}_{j}} \le \delta$.
\end{defn}
\begin{rem}
	The notion of connectivity threshold \cite{khateri2019comparison} provides a simple radial model for the maintenance of connectivity between any pair of sufficiently-close robots.
\end{rem}
\begin{defn}[Topological Neighbor Set]
	Suppose a robot network $\mathcal{R}$ whose topology links are characterized by the binary relation $\mathcal{V}(\mathcal{X}_{i}, \mathcal{X}_{j})$ when robots coordinated at $\mathcal{X}_{i}$ and $\mathcal{X}_{j}$ are linked together. Given a particular robot $\mathcal{R}_{i} \in \mathcal{R}$, the immediate neighbor set with respect to $\mathcal{R}_{i}$ is defined as $\Xi_{i} := \{\mathcal{X}_{j} \mid \mathcal{V}(\mathcal{X}_{i}, \mathcal{X}_{j}) \}$.
\end{defn}
\begin{defn}[Connectivity Relation]
	Given a set of robots $\mathcal{R}$ representing a robot network, unary relation $\mathcal{C}(\mathcal{R})$ holds if the robots of $\mathcal{R}$ are communicationally connected to each other by a topology. 
\end{defn}
\begin{defn}[Orphan Robot Set]
	Let $\mathcal{R}$ be a topologically connected ad-hoc robot network, i.e., $\mathcal{C}(\mathcal{R})$ holds. Assume that a fault leads to the outage of $\mathcal{R}_{i} \in \mathcal{R}$. Given the topological neighbor set $\Xi_i$ associated with $\mathcal{R}_{i}$, the orphan robot set $\mathcal{O}_{i} \subseteq \Xi_{i}$ is a subset of the topological neighbors of $\mathcal{R}_{i}$ that lose their connectivities to the rest of the network due to the outage of $\mathcal{R}_{i}$.
\end{defn}
\subsection{Bivariate Bayesian Gaussian Mixture Formalism}
\label{subsec:BGMM}
Bayesian Gaussian mixture models (BGMMs) \cite{roberts1998bayesian} generally use an expectation-maximization criterion to fit data to superposed Gaussian distributions for the purpose of Bayesian inference. The probability distribution function (PDF) associated with a BGMM may be generally multivariate. The number of variables in such a PDF technically depends on particular representations of topological states of a network. In this work, we use bivariate BGMMs (B-BGMMs) for the solution pathways of our problem. The PDF of a B-BGMM may be framed as $\phi_{ij} := \sum_{s \in S}\alpha_{ij}^{s} \mathscr{N}(\mathcal{X}_{ij};\mu_{ij},\Sigma_{ij})$, where $s$ belongs to an index set $S$, $\alpha_{ij}^{s}$ is the weight corresponding to the $s$\xx{-}th component of $\phi_{ij}$. The elements of each bivariate Gaussian distribution $\mathscr{N}(\cdot;\cdot,\cdot)$ are random variables $\mathcal{X}_{ij} :=\big[\mathcal{X}_{i}~~\mathcal{X}_{j}\big]^\intercal$, a mean vector $\mu_{ij} := \big[\mu_{i} ~~ \mu_{j}\big]^\intercal$, and a covariance matrix $\Sigma_{ij} := \mathbb{E}\big[(\mathcal{X}_{i} - \mu_{i})(\mathcal{X}_{j} - \mu_{j})^\intercal\big]$ associated with its robots. These weights are known a priori, but they are constantly updated in the training phase of their underlying BGMM and/or any modification applied to their input variables. \zzz{Tuned weights for an ad-hoc robot network, associated with a particular prediction scenario, may be used to initialize other scenarios. However, if such information is unknown, for example because the network is recently deployed, then random initialization of weights \cite{wilson2020bayesian} is a common strategy to follow, which is also employed by our method.} One may use a Dirichlet distribution (resp., Dirichlet process) to generate a finite (resp., an infinite) mixture model. Infinite models are usually approximated by Stick-breaking representations \cite{griffin2011stick,dunson2008kernel} that are truncated distributions with maximum number of components. These weights, as well as a concentration factor, directly impact the shapes of contours associated with embeddings of their BGMMs. In particular, the larger the concentration factor is, the more active Gaussian harmonics exist in their corresponding PDF.
\section{Methodology}
\label{sec:meth} 
B-BGMMs are often known as clustering methods. However, thanks to their powerful embedding capabilities, they may even offer expressive representations of data when one intends to keep track of very complex data including many interrelated components and variables. In this research, we particularly employ B-BGMMs as transformers of positional-topological data of ad-hoc robot networks both before and after fault occurrences in some of their robots. Assume that a fault occurs in a robot. We are interested in investigating the impact of that robot's absence on the topological connectivity of its network. For this purpose, as sketched in Figure \ref{fig:approach}, we establish two different inference pathways and analyze the connectivity of the overall network based on the comparison of their final connectivity probabilities. This idea comes from the fact that faults perturb covariance matrices of the PDFs associated with nominal topologies. Accordingly, in the course of a \textit{pre-fault prediction}, we use the total data of a nominal network corresponding to some states of the network before a fault occurs in one of its robots. Then, using Bayesian inference, we define two types of faults based on collisions and congestions, and we compute the probabilities, known as marginal likelihood, of their occurrences in that particular robot. We also compute prior probability of the network that states the probability of the connectivity
of the nominal network. By computing the likelihood as the last remaining step, we can achieve the posterior probability of connectivity after the occurrence of a fault.
\begin{figure}
	\centering\includegraphics[scale=0.9]{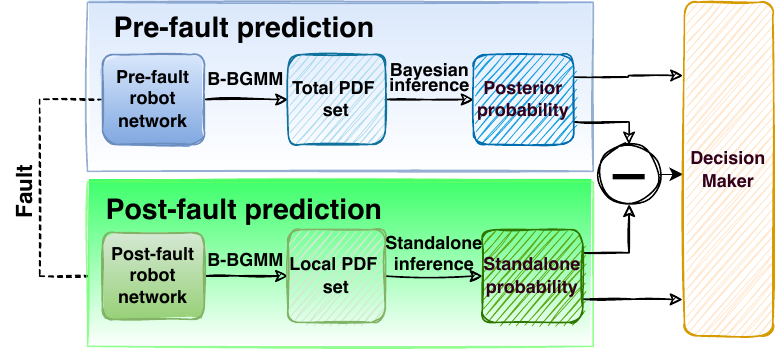}
	\caption{The BGMM-based fault-tolerant topology (ir)recoverability prediction\xx{.} (Algorithm 1, in Section \ref{subsec:alg}, realizes the processing flow depicted in this figure.)}
	\label{fig:approach}
\end{figure}

The resulting probability above has to be finally interpreted as a classification task, say, whether or not the post-fault network may be reconnected again using a viable topology. One may define a single heuristic threshold for this purpose, but such a single parameter may not give rise to very high prediction accuracies. Instead, we seek another estimation, i.e., \textit{post-fault prediction}, of the posterior probability cited above to make the classification more accurate. In particular, the impact of a faulty robot is locally applied to its neighbors. Those neighbors that are solely connected to that robot may lose their connections to the network once a fault occurs in that robot. So, we seek the probabilistic estimation of the connectivity of those orphan robots with the potential peers that are close enough to them. If those orphan robots can be communicationally merged again into the network by a new topology, then the overall network's connectivity is guaranteed. One notes that pre-fault and post-fault predictions are probabilistically equivalent. We use their difference and also their values separately to yield a very efficient classifier with high accuracies.
\subsection{Bayesian Inference through Gaussian Mixtures} 
\label{subsec:Bays}
One may train a B-BGMM using some input data \xx{to} generate a class of PDFs. The derived PDFs of a B-BGMM can be integrated to compute various probabilities related to their corresponding B-BGMM. Particularly, we are interested in the probability of the topological connectivity of two arbitrary robots indexed by $i$ and $j$, i.e., $\mathds{P}\Big[\norm{\mathcal{X}_{i} - \mathcal{X}_{j}} \le \delta\Big]$, that may be written as 
\begin{equation}
	\label{eq:pdf}
	\mathds{P}\Big[\norm{\mathcal{X}_{i} - \mathcal{X}_{j}} \le \delta\Big] = \iint\limits_{\Omega} \phi_{ij} \mathrm{d}\mathcal{X}_{i}\mathrm{d}\mathcal{X}_{j},
\end{equation}
where the disk region $\Omega$ is radially defined in the range of a connectivity threshold around the robots:
\begin{equation}
	\label{eq:region}
	\Omega := \Big\{(\mathcal{X}_{i}, \mathcal{X}_{j})\mid \norm{\mathcal{X}_{i} - \mathcal{X}_{j}} = \delta\Big\}.
\end{equation}
The PDF family $\phi_{ij}$ transforms the data of network topologies into Gaussian embeddings of robot coordinates for the applied analyses in this paper.

In the next sections, given a particular ad-hoc robot network and its corresponding topology, the probability template above plays a central role in the (ir)recoverability estimation of the topological connectivity of that network, should a fault remove one of its robots. Readers interested in more technical details about the BGMM formulation may refer to the related literature, e.g., \cite{li2008simultaneous,svensen2005robust,moore2001clustering,roberts1998bayesian}.
\subsection{Pre-Fault Prediction}
\label{subsec:pre}
As described in the previous section, pre-fault prediction requires the probabilistic assessment of two types of faults stemming from collisions and congestions with respect to a faulty robot. The goal is finding a posterior probability of connectivity through Bayesian inference. To formulate the ingredients of such an analysis, one has to first customize the general formulation of PDFs described in (\ref{eq:pdf}) and (\ref{eq:region}). We note that all robots of a network are subject to this analysis because prior probability computation, as we will later see in detail, requires the iteration over connectivities of robots that can be potentially paired with each other in a topological manner. Indices of the PDFs associated with a pre-fault prediction may arbitrarily vary over the index range of all robots as long as no pair of robots violates the connectivity threshold of their network.

We intend to analyze topology dynamics to track the connectivity of an ad-hoc robot network subject to a fault. So, fault occurrence is formally described in the language of probability theory.
\begin{defn}[Fault Relation]
	Suppose a robot $\mathcal{R}_{i} \in \mathcal{R}$ whose coordinate is denoted by random variable $\mathcal{X}_{i}$. Then, if a fault occurs in the robot, unary relation $\mathcal{F}(\mathcal{X}_{i})$ holds.
\end{defn}    
Mathematically speaking, we seek the probability of the topological connectivity of a robot network, should one of its robots be enforced to leave the topology because of a fatal fault. For this purpose, we employ Bayesian inference to estimate the posterior probability $\mathds{P}[\mathcal{C}(\mathcal{R}) \mid \mathcal{F}(\mathcal{X}_{i})]$, in which random variable $\mathcal{X}_{i}$ represents the faulty robot. Accordingly, we have to build Bayesian constituents, i.e., prior probability, marginal likelihood, and likelihood, for the estimation of the cited posterior probability.
\begin{lem}[Prior Probability]
		\label{lem:PP}
		Suppose $\delta$ denotes the connectivity threshold associated with an ad-hoc robot network $\mathcal{R}$. Given the the topological neighbor set $\Xi_{i}$ associated with each $\mathcal{R}_{i} \in \mathcal{R}$ and $k\defeq|\mathcal{R}|$, the connectivity probability of the network's current topology is
		\begin{equation}
			\label{eq:PP}
				\mathds{P}\Big[\mathcal{C}(\mathcal{R})\Big] =
				\mathbb{P}[\Xi_{k-1}|\Xi_{k-2}, \cdots, \Xi_{1}]\times\cdots\times
				\mathbb{P}[\Xi_{3}|\Xi_{2}, \Xi_{1}]\times\mathbb{P}[\Xi_{2}|\Xi_{1}]\times\mathbb{P}[\Xi_{1}].
		\end{equation}
	\end{lem}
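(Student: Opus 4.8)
The plan is to read \eqref{eq:PP} as nothing more than the chain rule of probability applied to a convenient decomposition of the connectivity event. First I would argue that, for a network whose links are governed by the binary relation $\mathcal{V}(\cdot,\cdot)$, the event $\mathcal{C}(\mathcal{R})$ is logically equivalent to the joint occurrence of the events that each robot $\mathcal{R}_{i}$ realizes its topological neighbor set $\Xi_{i}$ — i.e., every pair in $\Xi_{i}$ respects the connectivity threshold $\delta$ while the graph induced by the links stays connected. Here one interprets each $\Xi_{i}$, by the same slight abuse of notation already used in the statement, as the event ``$\mathcal{R}_{i}$ is linked exactly to the robots indexed in $\Xi_{i}$''. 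Because the link relation is symmetric — any edge $\{i,j\}$ belongs simultaneously to $\Xi_{i}$ and to $\Xi_{j}$ — the last neighbor set $\Xi_{k}$ carries no information beyond $\Xi_{1},\dots,\Xi_{k-1}$: every link incident to $\mathcal{R}_{k}$ is already recorded in the neighbor set of its lower-indexed endpoint. Hence $\mathcal{C}(\mathcal{R})$ coincides, as an event, with $\Xi_{1}\cap\Xi_{2}\cap\cdots\cap\Xi_{k-1}$.

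Second, I would expand $\mathds{P}\big[\Xi_{1}\cap\cdots\cap\Xi_{k-1}\big]$ by iterating the definition of conditional probability, $\mathbb{P}[A\cap B]=\mathbb{P}[A\mid B]\,\mathbb{P}[B]$, peeling off $\Xi_{k-1}$ first, then $\Xi_{k-2}$, and so on down to $\Xi_{1}$. This telescoping immediately produces the product on the right-hand side of \eqref{eq:PP}. No integration or B-BGMM machinery is needed at this stage; the individual conditional factors $\mathbb{P}[\Xi_{m}\mid\Xi_{m-1},\dots,\Xi_{1}]$ are exactly the quantities that will later be evaluated through the probability template \eqref{eq:pdf} over the disk region $\Omega$ defined in \eqref{eq:region}.

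The delicate part is the first step — pinning down the precise sense in which $\Xi_{i}$ is treated as an event and verifying the claimed equivalence with $\mathcal{C}(\mathcal{R})$, in particular the reduction from $k$ to $k-1$ factors. One must check that ``connected by a topology'' is genuinely captured by the conjunction of the neighbor-set events, so that no extra global connectivity constraint is silently dropped, and that the ordering of robots used to peel off the conditionals is immaterial, which follows from commutativity of intersection. Once these bookkeeping points are settled, the chain rule of probability delivers \eqref{eq:PP} with no further work.
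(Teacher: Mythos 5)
Your proposal is correct and follows essentially the same route as the paper: both identify $\mathcal{C}(\mathcal{R})$ with the joint occurrence of the neighbor-set events and expand by the chain rule, and both discharge the $k$-th factor via the redundancy of $\Xi_{k}$ — the paper by setting $\mathbb{P}[\Xi_{k}\mid\Xi_{k-1},\dots,\Xi_{1}]=1$ after expanding all $k$ terms, you by dropping $\Xi_{k}$ from the intersection before expanding. Your symmetry argument (every edge incident to $\mathcal{R}_{k}$ already appears in a lower-indexed neighbor set) is in fact a more explicit justification of that final step than the paper provides.
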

	\begin{proof}
		$\mathcal{R}_{i}$ has to be in the vicinity of its neighboring peers to be a part of the current topology of their network. One notes that the probability of existence of $\Xi_i$ affects the probability of existence
		of $\Xi_j$ for all $j\in\Xi_i$. Therefore, taking the stated dependency into account, we have $\mathds{P}[\mathcal{C}(\mathcal{R})] =\mathbb{P}[\Xi_{k}|\Xi_{k-1}, \cdots, \Xi_{1}]\times
		\mathbb{P}[\Xi_{k-1}|\Xi_{k-2}, \cdots, \Xi_{1}]\times\cdots\times
		\mathbb{P}[\Xi_{3}|\Xi_{2}, \Xi_{1}]\times
		\mathbb{P}[\Xi_{2}|\Xi_{1}]\times\mathbb{P}[\Xi_{1}]$. The existence of the first conditional is already established by all the previous conditionals, i.e., $\mathbb{P}[\Xi_{k}|\Xi_{k-1}, \cdots, \Xi_{1}]=1$. Thus, the lemma's claim holds.
\end{proof}
As noted before, one common type of fault is due to collisions between robots. In particular, in view of a typical robot, locations of its peers may only be partially observable. In other words, the understanding of a robot regarding the coordinates of its neighboring robots may be only expressed in a stochastic manner at some particular moments. \yy{In structured environments, robots often employ sensors to avoid collisions. However, in the case of large-scale ad-hoc robot networks, collision is much more complicated for two reasons. First, robots of these networks are often small with less computational power. So, they rarely utilize state-of-the-art proximity detection sensors. The resulting partial observability may lead to collisions. Even in the presence of such sensors, environments in which these networks work are often quite unstructured and not all complicated dynamics of those environments are known beforehand. For example, in a wildfire extinguishing mission, robots may have to move in a very tight formation during which severe collision avoidance measures may totally stall their movements.} Since occurrences of collisions are more likely when robots are close to each other, we define a threshold based on which collision probabilities can be quantified, as follows.   
\begin{defn}[Collision Threshold]
	Collision threshold $0<\omega<\delta$ determines a lower-bound distance between a pair of robots according to which they may safely move in each other's vicinity without any collision risk.
\end{defn}
\begin{lem}[Marginal Likelihood]
	\label{lem:ML}
	Let $\omega$ be a collision threshold corresponding to an ad-hoc robot network $\mathcal{R}$. Then, the collision probability of robot $\mathcal{R}_{i}\in \mathcal{R}$ is
	\begin{equation}
		\label{eq:ML}
		\mathds{P}\Big[\mathcal{F}(\mathcal{X}_{i})\Big] = \mathds{P}\Bigg[\bigcup_{j \in \Xi_{i}}\Big[\norm{\mathcal{X}_{i} - \mathcal{X}_{j}} < \omega\Big]\Bigg],
	\end{equation}
	where $\Xi_{i}$ is the topological neighbor set of $\mathcal{R}_{i} \in \mathcal{R}$.
\end{lem}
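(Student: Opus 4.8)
The plan is to reduce the claim to a pair of definitional unfoldings followed by a restriction argument. First I would read off the fault relation: in the collision regime, $\mathcal{F}(\mathcal{X}_{i})$ holds exactly when robot $\mathcal{R}_{i}$ is involved in a collision with at least one other robot of $\mathcal{R}$. Hence, as an event, $\mathcal{F}(\mathcal{X}_{i})$ equals the union, over the remaining robots $\mathcal{R}_{j}$, of the events ``$\mathcal{R}_{i}$ and $\mathcal{R}_{j}$ collide.'' Next I would invoke the collision-threshold definition, which identifies ``$\mathcal{R}_{i}$ and $\mathcal{R}_{j}$ collide'' with the event $\{\norm{\mathcal{X}_{i} - \mathcal{X}_{j}} < \omega\}$. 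Combining these two steps gives $\mathcal{F}(\mathcal{X}_{i}) = \bigcup_{j}\{\norm{\mathcal{X}_{i} - \mathcal{X}_{j}} < \omega\}$, with the union still over all $j \neq i$.

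The remaining step, and the one I expect to carry the real content, is to show that this union may be taken over $j \in \Xi_{i}$ only. Here I would use the ordering $\omega < \delta$ from the collision-threshold definition together with the connectivity-threshold definition: any $\mathcal{R}_{j}$ realizing $\norm{\mathcal{X}_{i} - \mathcal{X}_{j}} < \omega$ then satisfies $\norm{\mathcal{X}_{i} - \mathcal{X}_{j}} < \delta$, so such a robot sits within the connectivity range of $\mathcal{R}_{i}$ and is therefore one of its topological neighbors, i.e. $\mathcal{X}_{j} \in \Xi_{i}$. Consequently every term with $j \notin \Xi_{i}$ is the empty event and may be discarded, leaving $\mathcal{F}(\mathcal{X}_{i}) = \bigcup_{j \in \Xi_{i}}\{\norm{\mathcal{X}_{i} - \mathcal{X}_{j}} < \omega\}$; applying $\mathds{P}[\cdot]$ to both sides yields (\ref{eq:ML}).

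The subtlety I would be careful about is that being within $\delta$ only renders a link admissible, whereas $\Xi_{i}$ is defined through the realized topology relation $\mathcal{V}$, so strictly speaking ``within $\delta$'' need not force ``topological neighbor.'' I would resolve this by appealing to the neighborhood-scoped modeling convention adopted throughout the paper: the collision analysis for $\mathcal{R}_{i}$ is conducted within its topological neighborhood, so the admissible collision partners of $\mathcal{R}_{i}$ are precisely the members of $\Xi_{i}$, and the inequality $\omega < \delta$ then certifies that no collision partner can lie outside $\Xi_{i}$. I would also emphasize what the lemma does \emph{not} require: it is a statement about the probability of the union event itself, so no inclusion--exclusion expansion, union bound, or independence assumption on the pairwise collision events is needed at this stage.
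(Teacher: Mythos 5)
The paper gives no proof of this lemma at all: unlike Lemmas \ref{lem:PP} and \ref{lem:L}, it is stated bare, and the identity (\ref{eq:ML}) is effectively taken as the \emph{definition} of a collision fault rather than derived. Your definitional unfolding therefore supplies an argument the authors omit, and the first two steps (reading $\mathcal{F}(\mathcal{X}_{i})$ in the collision regime as ``some peer is within $\omega$ of $\mathcal{R}_{i}$,'' then identifying pairwise collision with the event $\{\norm{\mathcal{X}_{i}-\mathcal{X}_{j}}<\omega\}$) are exactly the intended reading and require, as you say, no inclusion--exclusion or independence.

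The subtlety you flag in the last paragraph is the real issue, and you should be more forceful about it: it is not merely that ``within $\delta$'' need not force ``topological neighbor'' --- it genuinely does not. The paper defines a topology as a \emph{minimum} set of links and $\Xi_{i}$ through the realized link relation $\mathcal{V}$, so three mutually $\omega$-close robots may sit in a spanning-tree-like topology in which one of the close pairs is unlinked; that pair can collide while neither robot belongs to the other's $\Xi$. Hence restricting the union from all $j\neq i$ to $j\in\Xi_{i}$ strictly shrinks the event, and (\ref{eq:ML}) cannot be deduced from the stated definitions --- it can only be adopted as a modeling postulate that collisions are tracked between topologically linked robots only. Your appeal to a neighborhood-scoped convention is the right (and really the only) way to close the argument, and it is consistent with the paper's silent treatment; just present it explicitly as the point where the lemma stops being a theorem and becomes a definition.
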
%
The second type of faults relates to computational outages of the robots whose communicational loads exceed a certain limit. Such faults are immediate consequences of topology congestions in which a robot is interconnected to many other peers. In such scenarios, similarly to star-shape topologies, a highly-interconnected robot may face failures because of the excessive passage of data associated with its neighboring peers. Topologically speaking, given a relatively-dense neighborhood of robots, the closer a robot to the center of mass of its neighborhood is, the more likely it is to become a hub-like node in that neighborhood, thereby being connected to many of its peers in that neighborhood. We first precisely present the notion of center of mass associated with a neighborhood of robots around a typical robot. We then define a congestion threshold to quantitatively monitor congestions.   
\begin{defn}[Center of Mass]
	Given a robot $\mathcal{R}_{i} \in \mathcal{R}$ and a constant mass factor $0<d<1$, fix the $d$-neighborhood of $\mathcal{R}_{i}$ as $\mathcal{B}_{i}^{d} := \{\mathcal{X}_{j} \vert \mathds{P}[\norm{\mathcal{X}_{i} - \mathcal{X}_{j}} \le \delta]  > d \} \dot{\cup} \{\mathcal{X}_{i}\}$. Then, the center of mass $\overline{\mathcal{X}}_{i}$ of that $d$-neighborhood is formulated as $
	\overline{\mathcal{X}}_{i} := (\sum_{\mathcal{X} \in \mathcal{B}_{i}^{d}}\mathcal{X})/\lvert\mathcal{B}_{i}^{d}\rvert
	$   
\end{defn}
\begin{defn}[Congestion Threshold]
	Congestion threshold $0<\lambda < \delta$ denotes a lower-bound distance between a robot and the center of mass associated with its $d$-neighborhood, for a given $d$.
\end{defn}
\begin{rem}
	\label{rem:dense}
	In dense ad-hoc robot networks, collision is often more critical compared to congestion. Thus, collision threshold should be larger than that of congestion \yy{so that the robots \qq{do not} get excessively close to each other.} On the other hand, in sparse networks in which robots are relatively far from each other, the major bottleneck would be congestion rather than collision. \xx{Accordingly}, congestion threshold should be smaller than that of collision. \yy{Theoretically, there exists efficient routing protocols to manage congestion in networks \cite{chen2007congestion}. However, these protocols often require centralized full observability over the traffic dynamics of a whole network, which is generally not possible in ad-hoc robot networks due to their distributed nature. Moreover, these techniques often require non-trivial computational resources which are not always available in networks encompassing small-scale robots.}
\end{rem}
\yy{\begin{lem}[Likelihood]
		\label{lem:L}
		Let $\delta$, $\omega$, and $\lambda$ be the thresholds associated with the connectivity, collision, and congestion of a robot network, respectively. Then, the likelihood of a fault occurrence in robot $\mathcal{R}_{i} \in \mathcal{R}$ is
		\begin{equation}
			\label{eq:L}
			\mathds{P}\Big[\mathcal{F}(\mathcal{X}_{i})\mid\mathcal{C}(\mathcal{R})\Big] = A + B - AB
		\end{equation}
		where $\Xi_{i}$ is the set of the topological neighbor set of $\mathcal{R}_{i}$ and $A = \mathds{P}\Big[\bigcup_{j \in \Xi_{i}}\Big[\norm{\mathcal{X}_{i} - \mathcal{X}_{j}} < \omega\Big]\Big]$ and $B = \mathds{P}\Big[\norm{\mathcal{X}_{i} - \overline{\mathcal{X}}_{i}} < \lambda\Big]\mathds{P}\Big[\bigcup_{j \in \Xi_{i}}\Big[\norm{\mathcal{X}_{i} - \mathcal{X}_{j}} \le \delta\Big]\Big]$
	\end{lem}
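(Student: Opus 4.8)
The plan is to read the fault relation $\mathcal{F}(\mathcal{X}_{i})$ as the disjunction of the two fault mechanisms described before the statement: a \emph{collision} of $\mathcal{R}_{i}$ with one of its topological neighbors (governed by $\omega$), and a \emph{congestion} of $\mathcal{R}_{i}$ caused by it acting as a hub-like node of its $d$-neighborhood (governed by $\lambda$ and $\delta$). Writing $E_{\mathrm{c}} \defeq \bigcup_{j\in\Xi_{i}}\big[\norm{\mathcal{X}_{i}-\mathcal{X}_{j}}<\omega\big]$ for the collision event and $E_{\mathrm{g}}$ for the congestion event, the first step is simply to note that $\mathcal{F}(\mathcal{X}_{i})$ holds precisely when $E_{\mathrm{c}}\cup E_{\mathrm{g}}$ occurs, so that the two-event inclusion--exclusion identity applies conditionally on $\mathcal{C}(\mathcal{R})$:
\[
\mathds{P}\big[\mathcal{F}(\mathcal{X}_{i})\mid\mathcal{C}(\mathcal{R})\big] = \mathds{P}\big[E_{\mathrm{c}}\mid\mathcal{C}(\mathcal{R})\big] + \mathds{P}\big[E_{\mathrm{g}}\mid\mathcal{C}(\mathcal{R})\big] - \mathds{P}\big[E_{\mathrm{c}}\cap E_{\mathrm{g}}\mid\mathcal{C}(\mathcal{R})\big].
\]

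Next I would identify the first two terms with $A$ and $B$. For $E_{\mathrm{c}}$, a close-approach event between $\mathcal{R}_{i}$ and its neighbors is a local configuration property that is unaffected by conditioning on the global connectivity relation, so $\mathds{P}[E_{\mathrm{c}}\mid\mathcal{C}(\mathcal{R})] = \mathds{P}[E_{\mathrm{c}}] = A$, which is exactly the marginal-likelihood expression of Lemma~\ref{lem:ML}. For $E_{\mathrm{g}}$, I would argue that $\mathcal{R}_{i}$ is congested exactly when it is simultaneously (i) within the congestion threshold $\lambda$ of the center of mass $\overline{\mathcal{X}}_{i}$ of its $d$-neighborhood, i.e.\ positioned to become a traffic hub, and (ii) actually linked to its neighbors, i.e.\ $\bigcup_{j\in\Xi_{i}}\big[\norm{\mathcal{X}_{i}-\mathcal{X}_{j}}\le\delta\big]$ holds so that there is traffic for it to relay. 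Treating the ``hub-position'' sub-event and the ``is-linked'' sub-event as independent then gives $\mathds{P}[E_{\mathrm{g}}\mid\mathcal{C}(\mathcal{R})] = \mathds{P}\big[\norm{\mathcal{X}_{i}-\overline{\mathcal{X}}_{i}}<\lambda\big]\,\mathds{P}\big[\bigcup_{j\in\Xi_{i}}[\norm{\mathcal{X}_{i}-\mathcal{X}_{j}}\le\delta]\big] = B$. For the last term I would invoke that the collision mode (driven by $\omega$-proximity) and the congestion mode (driven by $\lambda$- and $\delta$-geometry) depend on essentially independent aspects of the configuration, so $\mathds{P}[E_{\mathrm{c}}\cap E_{\mathrm{g}}\mid\mathcal{C}(\mathcal{R})] = AB$. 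Substituting yields $\mathds{P}[\mathcal{F}(\mathcal{X}_{i})\mid\mathcal{C}(\mathcal{R})] = A+B-AB = 1-(1-A)(1-B)$, i.e.\ the probability that at least one of the two independent fault modes fires.

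The main obstacle is not the algebra, which is just two-event inclusion--exclusion, but pinning down the independence hypotheses so that the product forms are legitimate: the independence of ``$\mathcal{R}_{i}$ near its center of mass'' from ``$\mathcal{R}_{i}$ linked to its neighbors'' that produces $B$, and the independence of the collision and congestion events that collapses the joint term to $AB$. I would introduce these explicitly as modeling assumptions, noting that they are consistent with Remark~\ref{rem:dense}, which already treats collision- and congestion-dominated regimes as governed by separate thresholds, and I would also make explicit the role of the conditioning on $\mathcal{C}(\mathcal{R})$ --- it is what anchors the likelihood to the connected nominal topology used elsewhere in the pipeline and what justifies keeping the $\delta$-linkage factor inside $B$ in the first place.
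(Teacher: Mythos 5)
Your proposal follows essentially the same route as the paper's own proof: decompose $\mathcal{F}(\mathcal{X}_{i})$ into the collision and congestion events, apply two-event inclusion--exclusion, identify the collision term with $A$ via Lemma~\ref{lem:ML}, express the congestion term as the product of the hub-position and linkage sub-events to obtain $B$, and collapse the joint term to $AB$ by independence of the two fault types. You are in fact more careful than the paper in flagging the independence of the two sub-events inside $B$ and the invariance under conditioning on $\mathcal{C}(\mathcal{R})$ as explicit modeling assumptions, which the paper's proof leaves implicit.
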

	\begin{proof}
		If $\mathcal{R}$'s topology is connected, then both collisions and congestion faults may (simultaneously) occur in $\mathcal{R}_{i}$. Thus, the fault likelihood of $\mathcal{R}_{i}$ is the summation of the probabilities of both fault types from which their intersection has to be subtracted. Namely, Lemma \ref{lem:ML} provides the marginal likelihood associated with a collision fault. For a congestion fault to occur in $\mathcal{R}_{i}$, it has to be in a close vicinity of its neighbors. Furthermore, the center of mass of the neighborhood has to be close enough to $\mathcal{R}_{i}$. The simultaneous realization of these conditions is equivalent to the assessment of the claimed  probability $\mathds{P}[\norm{\mathcal{X}_{i} - \overline{\mathcal{X}}_{i}} < \lambda]\mathds{P}[\bigcup_{j \in \Xi_{i}}[\norm{\mathcal{X}_{i} - \mathcal{X}_{j}} \le \delta]]$. Moreover, since the two types of faults are independent of each other, the probability of their intersection is the product of their probabilities. Hence, the overall likelihood is as the lemma's claim. 
\end{proof}}
\begin{rem}
	$\mathcal{C}(\mathcal{R})$ and $\mathcal{F}(\mathcal{X}_{i})$ are positively correlated because the more connected an ad-hoc robot network is, the higher the occurrence risk of congestion faults will be. It manifests the essence of the notion of the optimal topology \cite{macktoobian2023learning} as a trade-off to yield the minimum connectivity which provides complete coverage for robots.
\end{rem}
\begin{cor}[Pre-Fault Prediction (Posterior Probability)]
	\label{res:PP}
	Let $\delta$, $\omega$, and $\lambda$ be the thresholds associated with the connectivity, collision, and congestion of an ad-hoc robot network, respectively. Let also $\Xi_{i}$ be the topological neighbor set of $\mathcal{R}_{i} \in \mathcal{R}$. Then, the \textit{pre-fault prediction} (posterior probability) $\mathcal{P}_{\text{pre}}$ of the connectivity of the network's topology subject to a fault occurred in $\mathcal{R}_{i}$ is the immediate Bayesian inference result of Lemma\xx{s} \ref{lem:PP}, \ref{lem:ML}, and \ref{lem:L}, as prior probability, marginal likelihood, and likelihood, respectively.
\end{cor}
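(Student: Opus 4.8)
The plan is to treat Corollary~\ref{res:PP} as a one-line application of Bayes' theorem and then confirm that each of its three ingredients has already been furnished by the preceding lemmas. By the discussion preceding Lemma~\ref{lem:PP}, the pre-fault prediction is the posterior probability
\[
\mathcal{P}_{\text{pre}} \defeq \mathds{P}\big[\mathcal{C}(\mathcal{R}) \mid \mathcal{F}(\mathcal{X}_{i})\big],
\]
i.e.\ the probability that the network's topology stays connected given that a fatal fault has removed $\mathcal{R}_{i}$. Bayes' rule rewrites this quotient as
\[
\mathcal{P}_{\text{pre}} = \frac{\mathds{P}\big[\mathcal{F}(\mathcal{X}_{i}) \mid \mathcal{C}(\mathcal{R})\big]\,\mathds{P}\big[\mathcal{C}(\mathcal{R})\big]}{\mathds{P}\big[\mathcal{F}(\mathcal{X}_{i})\big]},
\]
which is well defined whenever the denominator is strictly positive.

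First I would pin down the correspondence between the factors of this ratio and the earlier results: the prior $\mathds{P}[\mathcal{C}(\mathcal{R})]$ is exactly the chain-rule factorization of Lemma~\ref{lem:PP}; the marginal likelihood (evidence) $\mathds{P}[\mathcal{F}(\mathcal{X}_{i})]$ is the collision probability of Lemma~\ref{lem:ML}; and the likelihood $\mathds{P}[\mathcal{F}(\mathcal{X}_{i}) \mid \mathcal{C}(\mathcal{R})]$ is the quantity $A+B-AB$ of Lemma~\ref{lem:L}. Substituting these three closed forms into the Bayes quotient produces $\mathcal{P}_{\text{pre}}$ explicitly as a function of the thresholds $\delta,\omega,\lambda$ and the topological neighbor sets, every elementary probability appearing inside being itself evaluable by integrating the relevant B-BGMM density over the disk region $\Omega$ as in (\ref{eq:pdf})--(\ref{eq:region}). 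Since the marginal likelihood of Lemma~\ref{lem:ML} and the collision term $A$ of Lemma~\ref{lem:L} are literally the same expression, the result can also be written compactly as $\mathcal{P}_{\text{pre}} = \mathds{P}[\mathcal{C}(\mathcal{R})]\,(A+B-AB)/A$, which makes transparent that nothing beyond the three lemmas is invoked.

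The only step that requires genuine care --- and hence the main obstacle --- is the well-posedness of the division, namely $\mathds{P}[\mathcal{F}(\mathcal{X}_{i})] > 0$. I would handle it by restricting, without loss of generality, to the regime the corollary is meant for: if $\Xi_{i} \neq \emptyset$ and the Gaussian embedding assigns positive mass to $\{\norm{\mathcal{X}_{i} - \mathcal{X}_{j}} < \omega\}$ for at least one $j \in \Xi_{i}$, then Lemma~\ref{lem:ML} gives $\mathds{P}[\mathcal{F}(\mathcal{X}_{i})] > 0$ and Bayes' theorem applies verbatim; if instead no neighbor of $\mathcal{R}_{i}$ can ever enter its collision radius with positive probability, then a collision-type fault in $\mathcal{R}_{i}$ is impossible, the conditioning event is vacuous, and the statement carries no content there. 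With positivity secured, the corollary is immediate: insert Lemmas~\ref{lem:PP}, \ref{lem:ML}, and \ref{lem:L} into the Bayes quotient as prior, marginal likelihood, and likelihood, respectively. The remainder is bookkeeping of the substitution, so I do not anticipate further difficulty.
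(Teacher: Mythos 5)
Your proof is correct and is exactly the argument the paper intends: the paper states Corollary~\ref{res:PP} without a written proof, asserting it as the immediate Bayes-rule combination of Lemmas~\ref{lem:PP}, \ref{lem:ML}, and \ref{lem:L} as prior, marginal likelihood, and likelihood, which is precisely the substitution you carry out. Your added check that the evidence $\mathds{P}[\mathcal{F}(\mathcal{X}_{i})]$ must be strictly positive is a well-posedness point the paper leaves implicit; it does not change the route.
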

\begin{rem}
	One notes that the result above can be straightforwardly generalized to obtain the posterior probability of topology (ir)recoverability if more than one robot experience faults. Namely, let $\mathcal{R}_{\mathcal{F}} \subset \mathcal{R}$ represent a set of faulty robots. Then, the total posterior probability turns into $\mathds{P}[\mathcal{C}(\mathcal{R})\mid\bigcup_{\mathcal{X}_{i} \in \mathcal{R}_{\mathcal{F}} }\mathcal{F}(\mathcal{X}_{i})]$.
\end{rem}
Corollary \ref{res:PP} indeed completes the computation of the pre-fault prediction pathway of Figure \ref{fig:approach}. 
\subsection{Post-Fault Prediction}
\label{subsec:post}
Post-fault prediction topologically seeks the probabilities of shaping new connections between orphan robots and the remainder of their peers. Put differently, the occurrence of a fault is communicationally problematic if the robots connected to a faulty robot can no longer establish any connection to other robots.

In a computational point of view, particularly in the case of large ad-hoc robot networks, connectivity checking for a non-trivial number of orphan robots is resource-intensive. Such a problem may become even more intractable if their topology is star-shape, and a fault occurs in a hub robot, which in turn, gives rise to the emergence of a noticeable number of orphan robots. In such scenarios, if a topology is irrecoverable due to some faulty robots, then spending time and resources to compute a new topology, in which all orphan robots are reconnected to the remainder of their peers, is futile. \xx{Thus}, it is beneficial if one can first estimate the (ir)recoverability probability of a faulty scenario, so that if that probability surpasses a minimum threshold, then topology re-computations shall be performed. 

Post-fault prediction requires the analysis of topology perturbations in neighborhoods of orphan robots that are subject to faults' communicational impacts. Accordingly, one-to-one assessments of those impacts between orphan robots and their neighbors can indeed pave the way for more efficient topological fault analysis. Similarly to the pre-fault prediction scenario, we need to determine the index variations for the general formulations of PDFs, say, (\ref{eq:pdf}) and (\ref{eq:region}). Accordingly, we again take B-BGMMs into account, the first component of which is an orphan robot's topological embedding. The second component of a B-BGMM comprises the coordinate random variable of one of that orphan robot's neighbors subject to some connectivity requirements. The result below provides the post-fault prediction associated with an arbitrary fault scenario.
\begin{thm}[Post-Fault Prediction]
		Suppose $\delta$ denotes the connectivity threshold associated with an ad-hoc robot network $\mathcal{R}$. Assume that a fault leads to the outage of $\mathcal{R}_{i} \in \mathcal{R}$ whose orphan robot set is $\mathcal{O}_{i}$. Let $\mathcal{S}_{i}^{j}$ be the immediate neighborhood set of robot $j\in\mathcal{O}_{i}$.Then, given $k\defeq|\mathcal{O}_{i}|$, the probability of the connectivity of $\mathcal{O}_{i}$ is given by
		\begin{equation}
			\label{eq:post}
				\mathds{P}\Big[\mathcal{C}(\mathcal{O}_{i})\Big] =\mathbb{P}[\mathcal{S}^{k}_{i}|\mathcal{S}_{i}^{k-1}, \cdots, \mathcal{S}_{i}^{1}]\times\mathbb{P}[\mathcal{S}_{i}^{k-1}|\mathcal{S}_{i}^{k-2}, \cdots, \mathcal{S}_{i}^{1}]\times\cdots\times
				\mathbb{P}[\mathcal{S}_{i}^{3}|\mathcal{S}_{i}^{2}, \mathcal{S}_{i}^{1}]\times
				\mathbb{P}[\mathcal{S}_{i}^{2}|\mathcal{S}_{i}^{1}]\times\mathbb{P}[\mathcal{S}_{i}^{1}]
		\end{equation}
	\end{thm}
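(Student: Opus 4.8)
The plan is to mirror the argument of Lemma~\ref{lem:PP}, now applied to the sub-network induced by the orphan set $\mathcal{O}_{i}$ instead of to the whole network $\mathcal{R}$. The event $\mathcal{C}(\mathcal{O}_{i})$ — that the orphaned robots can be re-merged into the network by a viable topology — is realized precisely when each orphan $j \in \mathcal{O}_{i}$ succeeds in establishing its immediate neighborhood set $\mathcal{S}_{i}^{j}$, where, crucially, $\mathcal{S}_{i}^{j}$ is allowed to contain peers drawn from outside $\mathcal{O}_{i}$, namely the surviving robots close enough to $j$. So I would first recast $\mathds{P}[\mathcal{C}(\mathcal{O}_{i})]$ as the probability of the joint event $\bigcap_{j=1}^{k}\{\mathcal{S}_{i}^{j}\text{ exists}\}$, enumerating the orphans $1,\dots,k$ in an arbitrary but fixed order.

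Second, I would apply the chain rule of probability to this joint event, which yields
\[
\mathds{P}\big[\mathcal{C}(\mathcal{O}_{i})\big]
= \mathbb{P}[\mathcal{S}_{i}^{k}\mid \mathcal{S}_{i}^{k-1},\dots,\mathcal{S}_{i}^{1}] \times \mathbb{P}[\mathcal{S}_{i}^{k-1}\mid \mathcal{S}_{i}^{k-2},\dots,\mathcal{S}_{i}^{1}] \times \cdots \times \mathbb{P}[\mathcal{S}_{i}^{2}\mid\mathcal{S}_{i}^{1}] \times \mathbb{P}[\mathcal{S}_{i}^{1}] .
\]
This is the same telescoping decomposition used for $\mathcal{C}(\mathcal{R})$ in Lemma~\ref{lem:PP}; the only formal change is the substitutions $\Xi \mapsto \mathcal{S}_{i}$ and $\mathcal{R} \mapsto \mathcal{O}_{i}$, so the chain-rule bookkeeping is insensitive to the fact that the $\mathcal{S}_{i}^{j}$ are post-fault neighborhoods (computed with $\mathcal{R}_{i}$ removed and admitting new links to surviving non-orphans) rather than the pre-fault $\Xi$'s.

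Third — and this is where the actual content lies — I would argue that the leading conditional collapses, $\mathbb{P}[\mathcal{S}_{i}^{k}\mid\mathcal{S}_{i}^{k-1},\dots,\mathcal{S}_{i}^{1}] = 1$. The justification is the reciprocity of the neighbor relation, $j' \in \mathcal{S}_{i}^{j}$ iff $j \in \mathcal{S}_{i}^{j'}$: once the neighborhood sets of orphans $1,\dots,k-1$ have all been fixed, every link incident to orphan $k$ that is required for the connectivity of $\mathcal{O}_{i}$ has already been committed by those earlier conditionals, so nothing further is imposed on $\mathcal{S}_{i}^{k}$ and the conditional probability equals $1$. Dropping this factor from the product gives exactly (\ref{eq:post}).

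I expect the main obstacle to be making this third step airtight rather than merely analogical: one has to be precise that the connectivity of $\mathcal{O}_{i}$ is exactly the conjunction of the $k$ neighborhood-existence events — no more, no less — and that the reciprocity argument genuinely forces the last conditional to $1$ for every admissible ordering of the orphans. A secondary subtlety worth flagging is that, because a faulty hub can spawn many orphans, $\mathcal{O}_{i}$ need not be connected \emph{within itself}; the statement should be read as orphans regaining connectivity to the rest of $\mathcal{R}$ through the $\mathcal{S}_{i}^{j}$, which is why those sets range over potential surviving peers. Everything else — finiteness of $\mathcal{O}_{i}$, well-definedness of the conditionals — is routine.
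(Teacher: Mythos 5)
Your proposal is correct and takes essentially the same route as the paper, whose entire proof is the single line ``The argument resembles that of Lemma~\ref{lem:PP}'': you simply carry out that analogy explicitly, recasting $\mathcal{C}(\mathcal{O}_{i})$ as the joint existence of the orphan neighborhoods and applying the chain rule, exactly as Lemma~\ref{lem:PP} does for $\mathcal{C}(\mathcal{R})$. One bookkeeping note: unlike (\ref{eq:PP}), the displayed product in (\ref{eq:post}) retains the leading conditional $\mathbb{P}[\mathcal{S}_{i}^{k}\mid\mathcal{S}_{i}^{k-1},\dots,\mathcal{S}_{i}^{1}]$, so your step two already yields the stated formula verbatim and your third step (collapsing that factor to $1$ and dropping it) is unnecessary for matching the statement, though harmless since the factor is claimed to equal $1$.
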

	\begin{proof}
		The argument resembles that of Lemma \ref{lem:PP}.
\end{proof}
\begin{algorithm}
	\SetNoFillComment
	\caption{Topology (Ir)recoverability Predictor}
	\label{alg:TPP} 
	\SetKwInput{KwData}{Inputs}
	\SetKwInput{KwResult}{Output}
	\KwData{Robot network $\mathcal{R}$\\
		\hspace*{13.4mm}Top\xx{o}logical neighbor set $\mathcal{T}$\\
		\hspace*{13.4mm}Connectivity threshold $\delta$\\
		\hspace*{13.4mm}Collision threshold $\omega$\\
		\hspace*{13.4mm}Congestion threshold $\lambda$\\
		\hspace*{13.4mm}Mass factor $d$\\
		\hspace*{13.4mm}Faulty robot $\mathcal{R}_{f} \in \mathcal{R}$\\
		\hspace*{13.4mm}Bound decision threshold $q_{b}$\\
		\hspace*{13.4mm}Differential decision threshold $q_{d}$}
	\KwResult{\zzz{Predicted class associated with the} topology (ir)recoverability of $\mathcal{R}\!\setminus\!\{\mathcal{R}_{f}\}$}
	\algrule[1pt]
	\tcc{Pre-fault prediction}
	$\Bigl\{\Xi_{i}\Bigr\}_{i=1}^{\lvert\mathcal{R}\rvert} \leftarrow$ Get the topological neighbor set of all elements of $\mathcal{R}$\\
	$\Bigl\{\phi_{i,j}\Bigr\}_{j=1}^{\lvert\Xi_{i}\rvert} \leftarrow$ Compute B-BGMMs for potentially-communicating pairs of robots\\
	Compute prior probability (\ref{eq:PP})\\
	Compute marginal likelihood (\ref{eq:ML})\\
	Compute likelihood (\ref{eq:L})\\
	$\mathcal{P}_{\text{pre}} \leftarrow $Compute pre-fault prediction\\
	\tcc{Post-fault prediction}
	$\Bigl\{\Xi_{f}\Bigr\}_{i=1}^{\lvert\mathcal{R}_{f}\rvert} \leftarrow$ Get the topological neighbor set $\mathcal{R}_{f}$\\
	$\Bigl\{\phi_{f,j}\Bigr\}_{j=1}^{\lvert\Xi_{f}\rvert} \leftarrow$ Compute B-BGMMs for pairs $\Bigl\{(\mathcal{R}_{f}, \mathcal{R}'_{f}) \mid (\forall\mathcal{R}'_{f} \in \Xi_{f})\Bigr\}$\\
	$\mathcal{P}_{\text{post}} \leftarrow $Compute post-fault prediction\\
	\tcc{Decision making}
	\nonl\If{$\mathcal{P}_{\text{pre}} \ge q_{b}\quad \&\quad \mathcal{P}_{\text{post}} \ge q_{b}\quad\&\quad \lvert\mathcal{P}_{\text{pre}} - \mathcal{P}_{\text{post}}\rvert\le q_{d}$}{The topology associated with $\mathcal{T}$ is fault tolerant, thereby being preserved. Now, re-computation of the topology probabilistically yields non-empty results.}
	\nonl\Else{The topology associated with $\mathcal{T}$ is not fault tolerant, so it may not be recovered.}
\end{algorithm}
\begin{rem}
	Since the connectivity bottleneck of an ad-hoc robot network, once a fault occurs, is the connectivity of its orphan robots associated with that fault, then one concludes that post-fault prediction has to be probabilistically the same as the pre-fault prediction, i.e., $
	\mathds{P}[\mathcal{C}(\mathcal{R})\mid\mathcal{F}(\mathcal{X}_{i})] \approx \mathds{P}[\mathcal{C}(\mathcal{O}_{i})]$. Because of the probabilistic nature of the presented analysis, the quoted equivalence is generally unlikely to turn into an equality. However, it opens a new venue, i.e., differential probability, for decision making about how to transform the values of pre-fault and post-fault predictions into one of the desired classes of Problem \ref{plm}. 
\end{rem} 
\subsection{Integration and Classification Algorithm}
\label{subsec:alg}
We need to transform the numerical values of pre-fault and post-fault predictions into categorical class labels of topology (ir)recoverability prediction, which are either recoverable or irrecoverable. For this purpose, we propose the following heuristic rule.
\begin{defn}[Decision-Making Mechanism]
	Denote by $\mathcal{P}_{\text{pre}}$ and $\mathcal{P}_{\text{post}}$ the pre-fault and post-fault predictions associated with the fault-tolerant topology (ir)recoverability prediction of an ad-hoc robot network subject to a fault occurred in one of its robots. Given \textit{bound decision threshold} $0 < q_{b}< 1$ and \textit{differential decision threshold} $0 < q_{d}< 1$, if $\mathcal{P}_{\text{pre}} \ge q_{b}$, $\mathcal{P}_{\text{post}} \ge q_{b}$, and $\lvert\mathcal{P}_{\text{pre}} - \mathcal{P}_{\text{post}}\rvert\le q_{d}$ are (resp., are not) simultaneously satisfied, then the topology of the network is recoverable (resp., irrecoverable).
\end{defn}  

Overall, the total process of the fault-tolerant topology (ir)recoverability prediction is encoded into Algorithm \ref{alg:TPP} that solves Problem \ref{plm}. This algorithm basically computes what Figure \ref{fig:approach} illustrates. \zzz{The algorithm is executed by a centralized software entity that governs the communications protocol of an ad-hoc robot network once a fault occurs. \xx{It means} the frequency of its execution is the same as the frequency of fault occurrences.} In this algorithm, following the formalism developed throughout Section \ref{sec:meth}, one computes pre-fault prediction $\mathcal{P}_{\text{pre}}$ (resp., post-fault prediction $\mathcal{P}_{\text{post}}$) in lines 1-6 (resp., 7-9). The topology (ir)recoverability classification is then conducted according to the decision-making rule explained in this section based on bound and differential decision thresholds.

\xx{The proposed method generally assumes that the coordinate information associated with each robot is certain. However, adding uncertainty to those coordinate information may still lead to correct classifications if the average of noise is significantly smaller than the collision threshold $\omega$. In this case, embeddings generated by B-BGMMs may effectively resemble those generated in the absence of the noise in view of safety and collision avoidance. Nevertheless, the general reliability subject to severe noise and/or partial observability may be a topic of future research. Additionally, the input of the algorithm is indeed a static snapshot associated with position of robots in a dynamic ad-hoc robot network in addition to the positional information of a faulty robot. This can similarly be the case for an immobile sensor network. The difference is that due to the mobility of an ad-hoc robot network, one may need multiple predictions over time, while such successive predictions may not be required for \yy{immobile sensor networks}.}
\begin{table}
	\centering
	\caption{Cross-validation-based tuned values of the topological hyperparameters}
	\begin{tabular}{lll}
		\toprule  
		{\bfseries Parameters}& {\bfseries Variation sets} &{\bfseries Optimal val.}\\\cmidrule{1-3}
		Connectivity threshold $\delta$ & $\{1.6, 1.8, 2.0, 2.2, 2.4\}$& $2.0$\\
		Collision threshold $\omega$ & $\{0.1, 0.2, 0.3, 0.4, 0.5\}$ &$0.4$\\
		Congestion threshold $\lambda$ & $\{0.6, 0.7, 0.8, 0.9, 1.0\}$& $0.9$\\
		Mass factor $d$ & $\{0.3, 0.4, 0.5\}$&$0.5$\\
		Bound decision threshold $q_{b}$ & $\{0.65, 0.70, 0.75, 0.80\}$& $0.75$\\
		Diff. decision threshold $q_{d}$ & $\{0.1, 0.15, 0.2\}$ & $0.1$\\		
		\bottomrule		
	\end{tabular}
	\label{tbl:spec}
\end{table}
\begin{table}
	\centering 
	\caption{Parameter setting associated with meta-navigation-function-based coordinators of robots.}
	\begin{tabular}{lll}
		\toprule  
		{\bfseries Parameters}&{\bfseries Variation sets} &{\bfseries Optimal val.}\\\cmidrule{1-3}
		Kernel factor $\beta$ & $\{7.0, 7.5, 8.0, 8.5\}$& $8$\\
		Attractive factor $\lambda_{1}$& $\{0.4, 0.5, 0.6, 0.7\}$ & $0.5$\\
		Repulsive factor $\lambda_{2}$& $\{0.08, 0.09, 0.01, 0.11\}$ & $0.10$\\
		Associative factor $\lambda_{3}$& $\{0.001, 0.002, 0.003\}$ & $0.001$\\	
		\bottomrule		
	\end{tabular}
	\label{tbl:mnf}			
\end{table}
\yy{\begin{prop}[Computational Complexity]
		Let $|\mathcal{R}|$ be the cardinality of ad-hoc robot network $\mathcal{R}$. Then, the computational complexity of Algorithm \ref{alg:TPP} is $\mathscr{O}(|\mathcal{R}|^{2})$.
	\end{prop}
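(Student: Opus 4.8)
The plan is to traverse Algorithm~\ref{alg:TPP} block by block and bound the cost of each line in terms of $n \defeq |\mathcal{R}|$, while treating the topological hyperparameters ($\delta$, $\omega$, $\lambda$, $d$, $q_{b}$, $q_{d}$) and the structural parameters of the B-BGMMs --- the cardinality $|S|$ of the mixture index set, the stick-breaking truncation level, and the number of EM iterations --- as fixed constants independent of $n$. Under this convention I would first record the auxiliary fact that fitting one bivariate B-BGMM and evaluating any probability of the form~(\ref{eq:pdf})--(\ref{eq:region}) from it costs $\mathscr{O}(1)$: each such model is trained on the coordinates of exactly two robots, its number of active Gaussian components is capped a priori, and the disk $\Omega$ together with the per-component Gaussian integrals admits fixed-accuracy (indeed closed-form) evaluation that does not depend on the rest of the network.

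Next I would dispatch the pre-fault block, lines~1--6. Extracting the neighbor sets $\{\Xi_{i}\}_{i=1}^{n}$ tests the connectivity predicate on every pair of robots, hence $\mathscr{O}(n^{2})$; computing the family $\{\phi_{i,j}\}$ ranges over the at most $\binom{n}{2}$ potentially-communicating pairs, so by the constant-cost-per-model fact it is again $\mathscr{O}(n^{2})$. The prior probability~(\ref{eq:PP}) is a product of $n-1$ conditional factors, each assembled from already-computed PDF integrals over a neighbor set of size at most $n$, giving $\mathscr{O}(n^{2})$ in total; the marginal likelihood~(\ref{eq:ML}) and the likelihood~(\ref{eq:L}) are unions and products over $|\Xi_{i}| \le n$ events, i.e. $\mathscr{O}(n)$ each; and forming $\mathcal{P}_{\text{pre}}$ by Bayesian combination is $\mathscr{O}(1)$. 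Thus the pre-fault block is $\mathscr{O}(n^{2})$.

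Then I would treat the post-fault block, lines~7--9, followed by decision making, lines~10--11. The sets $\Xi_{f}$ and the orphan neighborhoods $\mathcal{S}_{i}^{j}$ all have cardinality at most $n$, so computing the B-BGMMs for the pairs $(\mathcal{R}_{f},\mathcal{R}'_{f})$ costs $\mathscr{O}(n)$, and evaluating the telescoping product~(\ref{eq:post}) costs $\mathscr{O}(|\mathcal{O}_{i}|^{2}) = \mathscr{O}(n^{2})$ by the same argument used for~(\ref{eq:PP}). Decision making is three scalar comparisons, i.e. $\mathscr{O}(1)$. Adding the three blocks, $\mathscr{O}(n^{2}) + \mathscr{O}(n^{2}) + \mathscr{O}(1) = \mathscr{O}(n^{2})$, which is the claimed bound.

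The step I expect to be the main obstacle is not the counting above but the auxiliary fact invoked in the first paragraph: a fully rigorous argument must rule out any hidden dependence of B-BGMM training, or of the integral $\iint_{\Omega}\phi_{ij}\,\mathrm{d}\mathcal{X}_{i}\mathrm{d}\mathcal{X}_{j}$, on the network size. I would close this gap by appealing to the finiteness of the mixture imposed by the Dirichlet/stick-breaking construction of Section~\ref{subsec:BGMM} (so the number of summed harmonics is bounded independently of $n$) and to the fact that each summand is a bivariate Gaussian integrated over a disk of fixed radius, whence the per-pair cost is a constant depending only on the declared hyperparameters; with that in hand the quadratic count of robot pairs dominates and the proposition follows.
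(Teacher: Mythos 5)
Your proof is correct and follows essentially the same route as the paper's: both identify the $\binom{|\mathcal{R}|}{2}$ pairwise B-BGMM computations in line 2 as the dominant cost, with the post-fault block bounded by $\binom{x}{2}$ for some $x<|\mathcal{R}|$. In fact your version is the more complete of the two --- you account for every line and explicitly justify the constant per-pair cost, whereas the paper only examines lines 2 and 8 and closes with an unnecessary (and numerically dubious) appeal to Stirling's approximation, yielding a constant of $e^{4}/2$, where the exact identity $\binom{n}{2}=n(n-1)/2$ already gives the claimed $\mathscr{O}(|\mathcal{R}|^{2})$ bound.
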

	\begin{proof}
		The line 2 of the algorithm encompasses ${|\mathcal{R}| \choose 2}$ operations. For a particular $x< |\mathcal{R}|$, the line 8 includes ${x \choose 2}$ operations. Thus, the bottleneck is the line 2. The expansion of ${|\mathcal{R}| \choose 2}$, for large $|\mathcal{R}|$, using the Sterling's approximation yields
		${|\mathcal{R}| \choose 2} = \frac{|\mathcal{R}|!}{2!(|\mathcal{R}|-2)!}
		\approx \frac{e^4}{2}|\mathcal{R}|^2$,
		which completes the proof.
\end{proof}}
\section{Results\protect\footnote{The simulations are performed on a Windows 10 64x machine supported by a Core i7 1.80 GHz processor, 8GB RAM, and an Intel UHD Graphics 620. The following specific libraries are employed in the course of all performed simulations on Python 3.7.3: Tensorflow and Keras 2.5.0, Scikit-Learn 0.24.2}}
\label{sec:res}
\subsection{Setup}
\begin{table} 
	\centering 
	\caption{Cross-validation-based tuned values of B-BGMMs hyperparameters.} 
	\begin{tabular}{lll}
		\toprule  
		{\bfseries Parameters}& {\bfseries Variation sets} &{\bfseries Optimal val.}\\\cmidrule{1-3}
		Maximum \# of components & $\{10, 15, 20, 25\}$& $15$\\
		Covariance type & $\{\text{Full}, \text{Tied}\}$ &Full\\
		\# of initializations & $\{10, 15, 20, 25\}$ &$10$\\
		Weight concen. prior type & Dirichlet $\{\text{dist.}, \text{proc.}\}$&Dirichlet dist.\\
		Weight concen. prior $\gamma_{0}$ & $\{950, 1000, 1050, 1100\}$&$1000$\\
		Maximum iterations & $\{4000, 5000, 6000\}$& $5000$\\		
		\bottomrule		
	\end{tabular}
	\label{tbl:opt}
\end{table}
\begin{figure}
	\centering\includegraphics[scale=0.8]{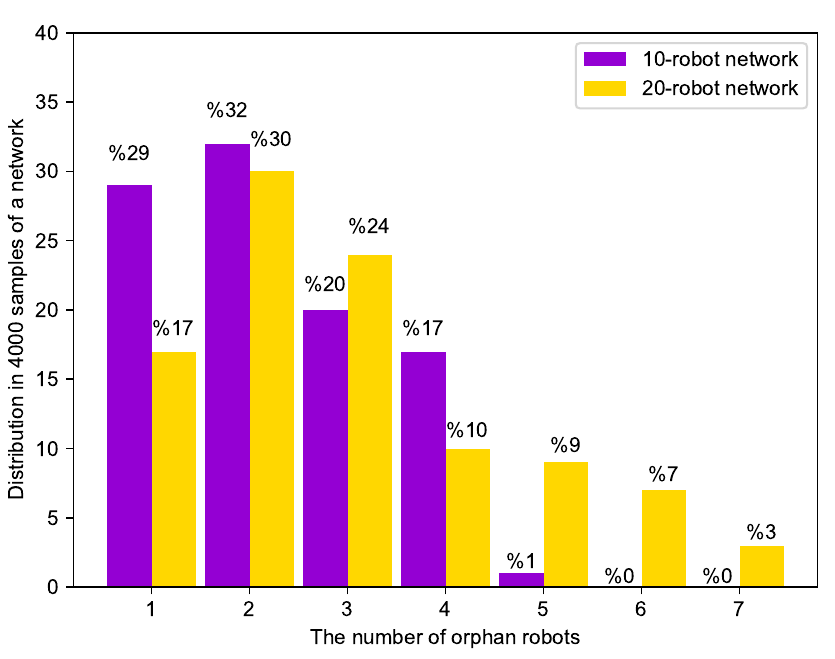}
	\caption{Orphan robot cardinality distributions due to faults applied to random robots\xx{.}}
	\label{fig:dist_net}
\end{figure}
\begin{figure}
	\centering
	\begin{subfigure}[b]{0.3\textwidth}
		\centering
		\includegraphics[width=\textwidth]{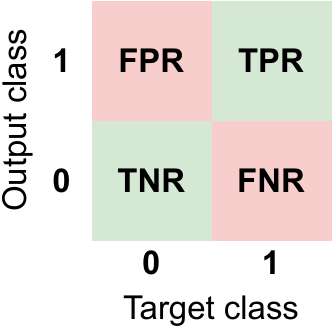}
		\caption{{Template\xx{.}}}    
		\label{fig:cm-temp}
	\end{subfigure}
	\begin{subfigure}[b]{0.3\textwidth}  
		\centering 
		\includegraphics[width=\textwidth]{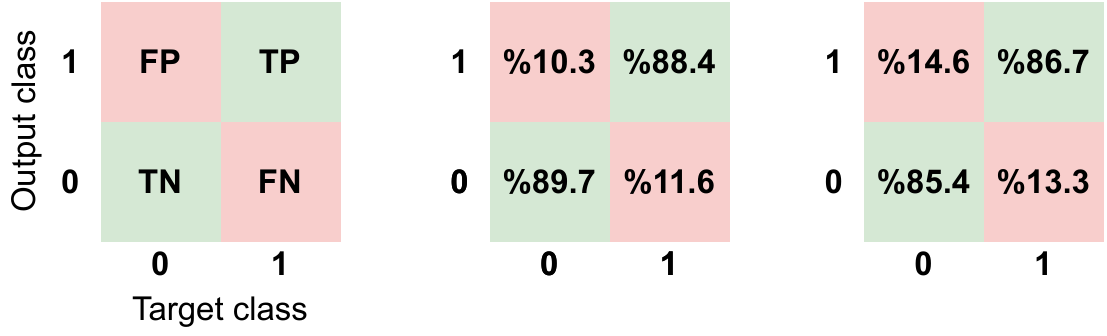}
		\caption{{The 10-robot network\xx{.}}}    
		\label{fig:cm_10}
	\end{subfigure}
	\begin{subfigure}[b]{0.3\textwidth}   
		\centering 
		\includegraphics[width=\textwidth]{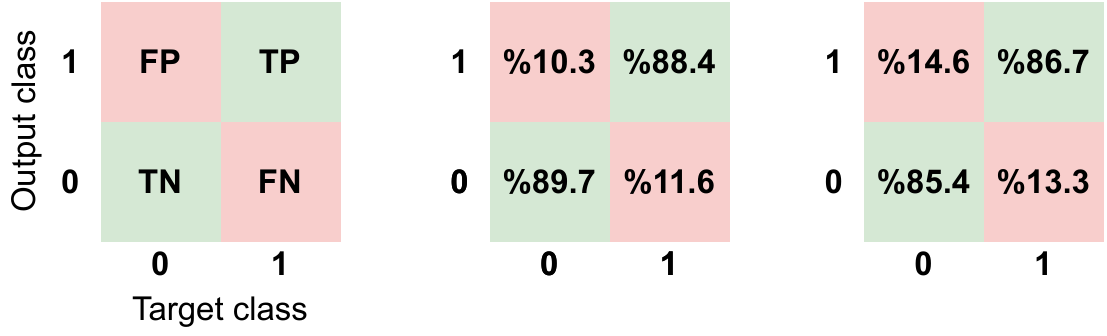}
		\caption{{The 20-robot network\xx{.}}}    
		\label{fig:cm_20}
	\end{subfigure}
	\caption{Confusion matrices\xx{.}} 
	\label{fig:cm}
\end{figure}
\begin{table}
	\centering
	\caption{The performance report of our strategy\xx{.}} 
	\begin{tabular}{cccccc}
		\toprule  
		Robot network population& TPR (\%) & TNR (\%)& Balanced accuracy (\%)& Precision (\%)& F1 (\%)\\\cmidrule{1-6}
		10 &88.4&89.7&89.5&92.4&87.4\\
		20 &86.7&85.4&86.5&89.9&85.1\\		
		\bottomrule		
	\end{tabular}
	\label{tbl:res}
\end{table}
We take two datasets\footnote{The generator of the dataset used in this study may be found in	https://git.io/JXKSb.} into account for a 10-robot and a 20-robot network\xx{,} each of which includes 4600 records corresponding to different topological configurations of those networks. Data collection is a low-level operation compared to the intended topology (ir)recoverability prediction process. So, in our simulations, we simply use a shared memory including data of all robots accessible by all of them. As stated before, the input for the topology (ir)recoverability prediction problem \xx{consists of} the snapshots of the locations of robots. Even though the evolutions of those locations, known as coordination, are irrelevant in view of how the Algorithm \ref{alg:TPP} works, one has to take those evolutions into account during simulations to consider physical aspects of robot motions. In particular, we employ meta navigation functions \cite{macktoobian2022meta} of the form
\begin{equation*}
	\label{eq:psi}
		\psi(\bm{q_{i}};\alpha) :=  
		\lambda_{1}\norm[\big]{\bm{q_{i}}-\bm{q^{t}}_{i}}^{2} +
		\frac{\lambda_{2}}{\alpha}\sum_{\mathclap{\rule{0mm}{4mm} j\in[\mathcal{R}\setminus\{i\}]\dot{\cup}\mathcal{O}}}\dfrac{\norm[\big]{\bm{q_{i}}-\bm{q^{t}_{i}}}^{\frac{1}{\alpha}}}{\norm{\bm{q_{i}}-\bm{q_{j}}}^{2}\hfill}+
		\lambda_{3}\norm[\big]{\bm{q_{i}}-\bm{q^{t}_{i}}}^{2}\displaystyle\sum_{\mathclap{k\in[\mathcal{R}\setminus\{i\}]}}\norm{\bm{q_{k}}-\bm{q^{t}_{k}}}^{2},
\end{equation*}
to coordinate robots from one configuration to another. Here, $\bm{q}_{i}$ and $\bm{q}^{t}_{i}$ are the location vector of robot $i$ and its target location, respectively; $\lambda_{1}$, $\lambda_{2}$, and $\lambda_{3}$ are attractive, repulsive, and associative factors of the function; $\mathcal{R}$ is the set of robots, and $\mathcal{O}$ is the set of obstacles that are other robots with respect to robot $i$; $\alpha$ is an adaptive confinement factor. The attraction kernel of the cited meta navigation function is $\omega(\bm{q}) := \beta\norm{\bm{q} - \bm{q^{t}}}$,	where, $\beta$ is a kernel factor. The derivative of $\psi(\bm{q_{i}};\alpha)$ is defined as a velocity profile for robot $i$ to be coordinated according to its target plan. In the setup of the distributed coordination controller associated with each robot, physical aspects of its coordination problem such as safety radius and the setting of artificial potential fields have to be considered\footnote{\xx{There are general guidelines to set the hyperparameters of meta navigation functions (which require no direct optimization) whose details may be found in \cite{macktoobian2022meta}.}}, as specified in Table \ref{tbl:mnf}. Each coordination is continued until either it is completed or a fault occurs. 

As stated before, the source of a fault may be a collision in a very dense area or the congestion of a robot's buffer because of unexpected flows of inward data to that buffer. Once a fault occurs, the states of robots, say, their locations, are taken into account as the aforesaid snapshots to be used in our B-BGMM-based model.
\begin{figure*}
	\centering
	\begin{subfigure}[b]{0.45\textwidth}
		\centering
		\includegraphics[width=\textwidth]{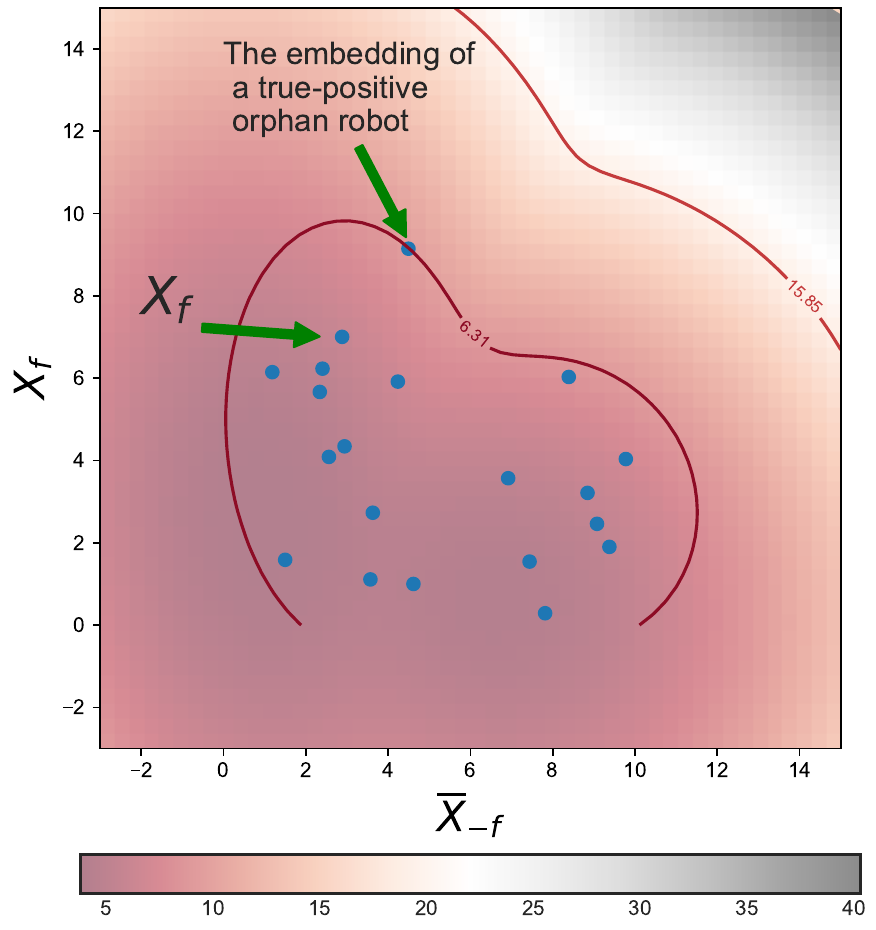}
		\caption[]%
		{{\small A true positive embedding}}    
		\label{dif:tp}
	\end{subfigure}
	\hfill
	\begin{subfigure}[b]{0.45\textwidth}  
		\centering 
		\includegraphics[width=\textwidth]{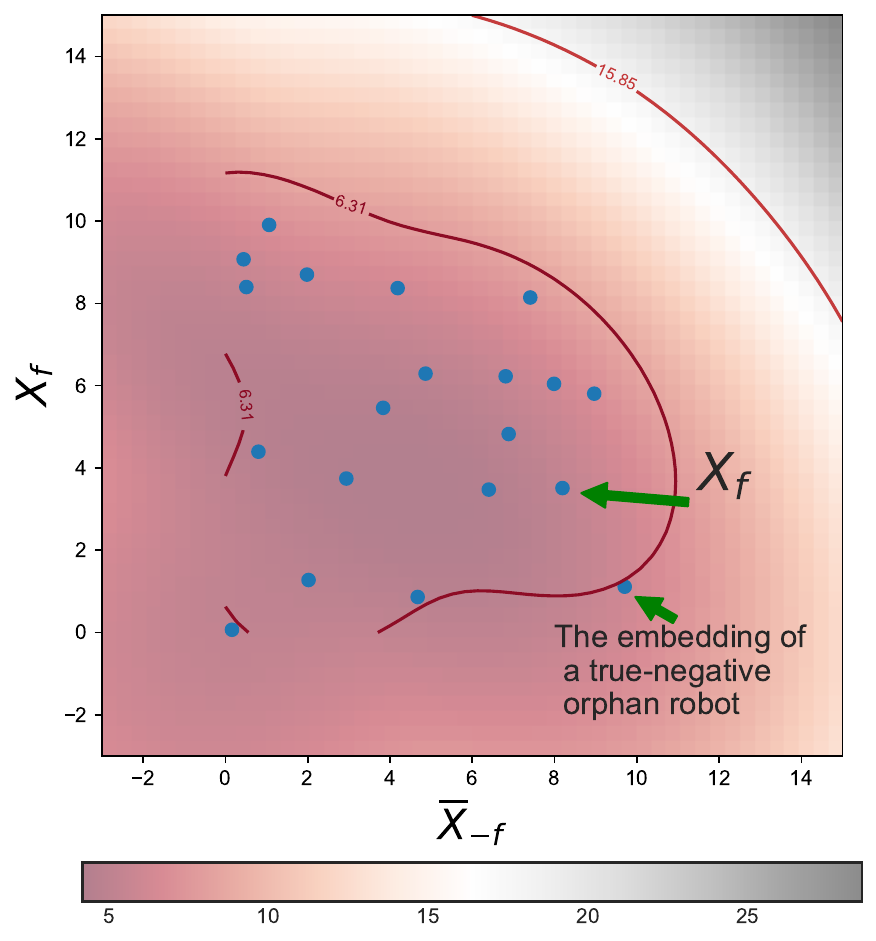}
		\caption[]%
		{{\small A true negative embedding\xx{.}}}    
		\label{fig:tn}
	\end{subfigure}
	\vskip\baselineskip
	\begin{subfigure}[b]{0.475\textwidth}   
		\centering 
		\includegraphics[width=\textwidth]{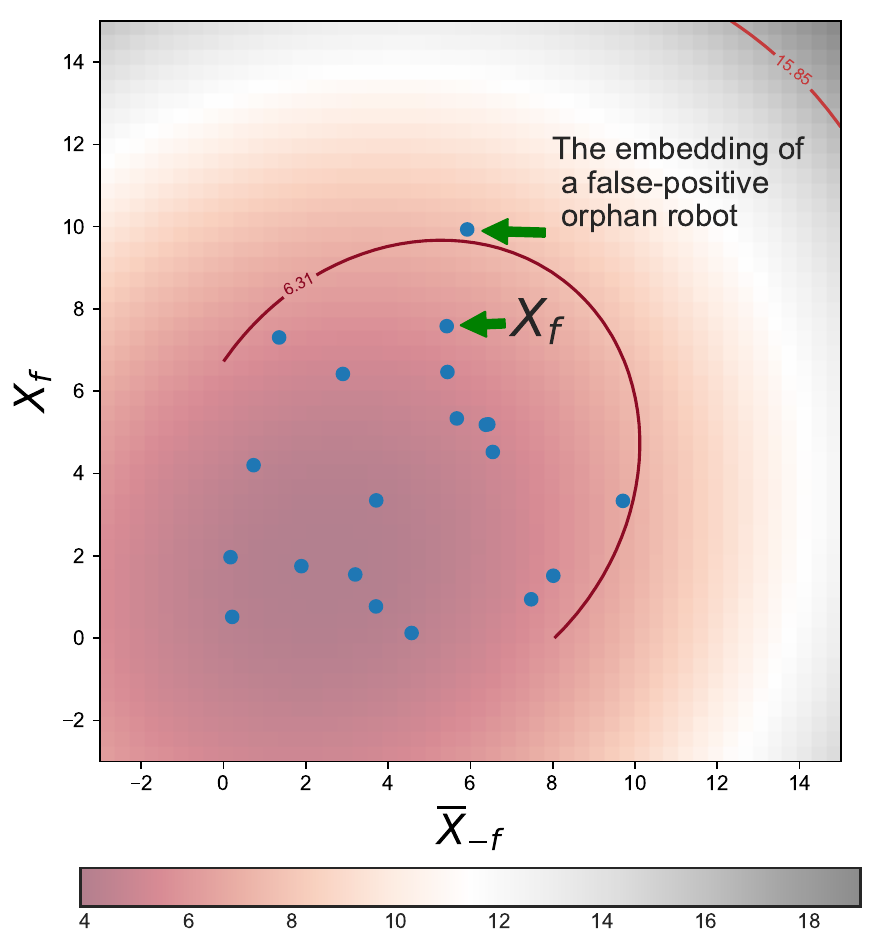}
		\caption[]%
		{{\small A false positive embedding\xx{.}}}    
		\label{fig:fp}
	\end{subfigure}
	\hfill
	\begin{subfigure}[b]{0.475\textwidth}   
		\centering 
		\includegraphics[width=\textwidth]{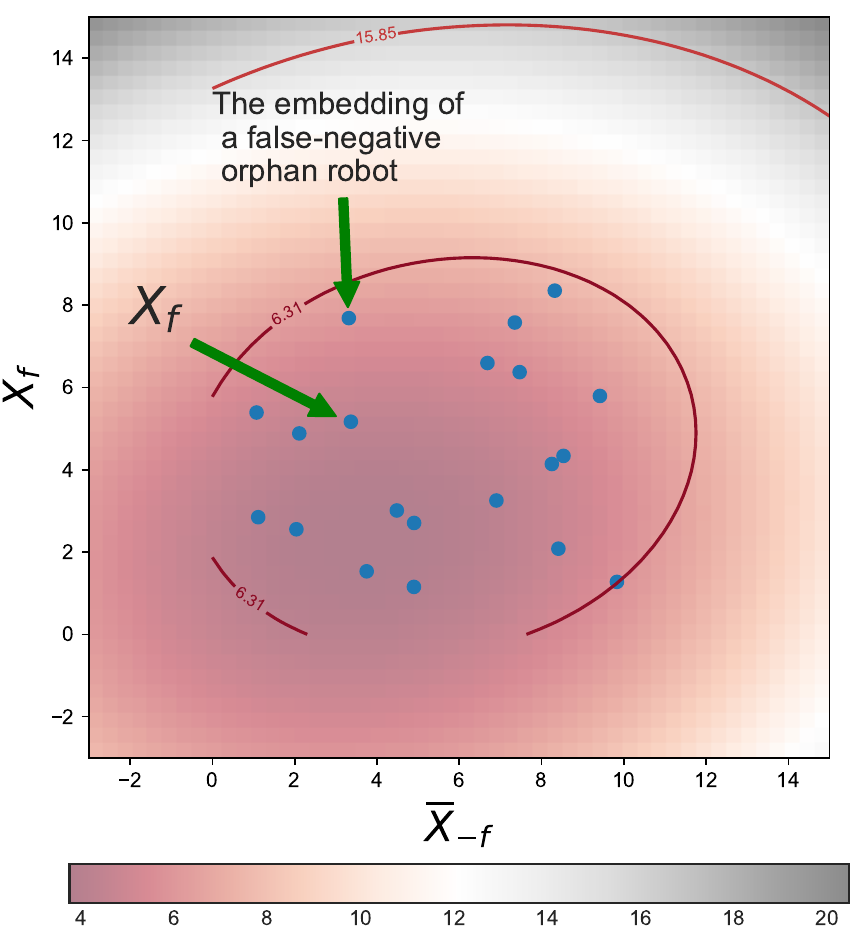}
		\caption[]%
		{{\small A false negative embedding\xx{.}}}    
		\label{fig:fn}
	\end{subfigure}
	\caption[Sample contours of embeddings]
	{\small Sample contours associated with \xx{four} differently-classified embeddings corresponding to the 20-robot network\xx{.} (in each figure, $X_{f}$ is a faulty robot, while $\overline{X}_{-f}$ is its orphan robot according to which the embedding PDF is computed. The numbers on each contour denotes the relative score of that contour regarding its estimated success to participate in the desired prediction task.)} 
	\label{fig:contours}
\end{figure*}

Topologies are computed using OpTopNET \cite{macktoobian2023learning}. \zzz{For the 10-robot and 20-robot networks we test here, OpTopNET instantly computes the optimal topology even on an average machine.} We split the datasets into disjoint pieces to perform training, cross-validation, and testing. The number of records associated with those processes are 4000, 400, and 200, respectively. The values of the parameters associated with our approach are specified in Table \ref{tbl:spec}. We set the values corresponding to connectivity threshold $\delta$, congestion threshold $\lambda$, and collision threshold $\omega$ to 2, 0.9, and 0.4, respectively. In particular, the value of $\delta$ directly depends on the radiation power of robots\footnote{\zzz{The radiation power of a robot corresponds to the strength of electromagnetic waves, such as WiFi, emitted by that robot's antenna for communicational purposes.}}, say, the stronger their antennas are, the larger this threshold will be. Large values of $\delta$ manifest more robot distribution flexibility with respect to a network. Communication faults occur more frequently in sparse networks. 

As already stated in Remark \ref{rem:dense}, the fault bottleneck in our experiments is congestion. Thus, the congestion threshold $\lambda$ is larger than the collision threshold $\lambda$. Mass factor $d$ impacts the radius of all $d$-neighborhoods of a network. Accordingly, the larger the value of $d$ is, the smaller $d$-neighborhoods are. That is because smaller values of $d$ probabilistically increase the radii of $d$-neighborhoods. We set $d$ to 0.5 to achieve balanced neighborhoods for better assessments of congestions. Bound decision threshold $q_b$, set to 0.75, determines the minimum confidence in the estimation of both pre-fault and post-fault predictions. Differential decision threshold $q_d$ is considered to be 0.1 to track the consensus of the aforesaid predictions to distinguish the scenarios in which there are discrepancies between their probabilistic votes.  

\yy{We use grid-search cross-validation to tune all hyperparameters associated with our method. In particular, each optimal value is found with respect to a specific variation set. Tables \ref{tbl:spec}, \ref{tbl:mnf}, and \ref{tbl:opt} report the conducted hyperparameter cross-validation corresponding to topological characterizations, meta navigation function, and BGMMs of the proposed scheme, respectively.} In particular, we \xx{do not} impose any restriction on the full covariance matrices of our B-BGMMs not to restrict the contour shapes of the models' PDFs. In scenarios where data \xx{are} inherently generated from Gaussian distributions, one can efficiently find the optimal number of components for each B-BGMM using the Bayesian inference criterion \cite{schwarz1978estimating}. However, we generally assume that robots are uniformly distributed in a network. \xx{That explains why} the method above may not be applicable to our approach. Alternatively, we select an upper-bound for the maximum number of components, so that expectation-maximization algorithm converges to any number of components up to that upper-bound. The cross-validation process asserts that the optimal upper-bound in our setting is 15. Parameter initializations associated with mean and covariance matrices may be either random or based on K-Means algorithm. K-Means often imposes some classification biases on input data that are not desirable in the case of uniform distributions. So, cross-validation votes for the optimality of random initializations. The more diverse the components of a B-BGMM, corresponding to pre-fault or post-fault predictions, are, the higher the probability of correct detection of topology (ir)recoverability will be. \xx{Thus}, the cross-validation process, acknowledging this requirement, selects Dirichlet distribution over Dirichlet process as the weight concentration prior type of our models. 
\subsection{Performance}
Faults are randomly applied to robots resulting in the orphan robot distributions exhibited in Figure \ref{fig:dist_net}. One observes that the majority of faults lead\xx{s} to the creation of no more than \xx{four} orphan robots. This observation can be justified by the fact that robot coordinates are generated by a uniform distribution. So, the robots may not probabilistically shape dense local populations around any peer.   
\begin{figure*}
	\centering
	\begin{subfigure}[b]{0.49\textwidth}
		\includegraphics[width=\textwidth]{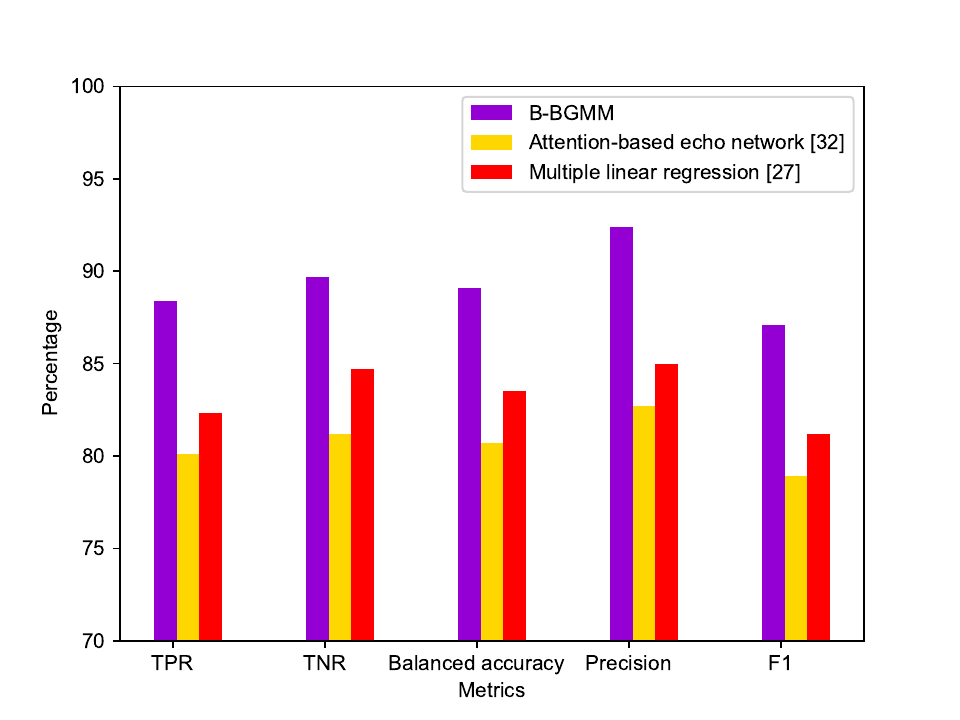}
		\caption[]%
		{{\small The 10-robot network case\xx{.}}}    
		\label{fig:ag10}
	\end{subfigure}
	\begin{subfigure}[b]{0.49\textwidth}  
		\includegraphics[width=\textwidth]{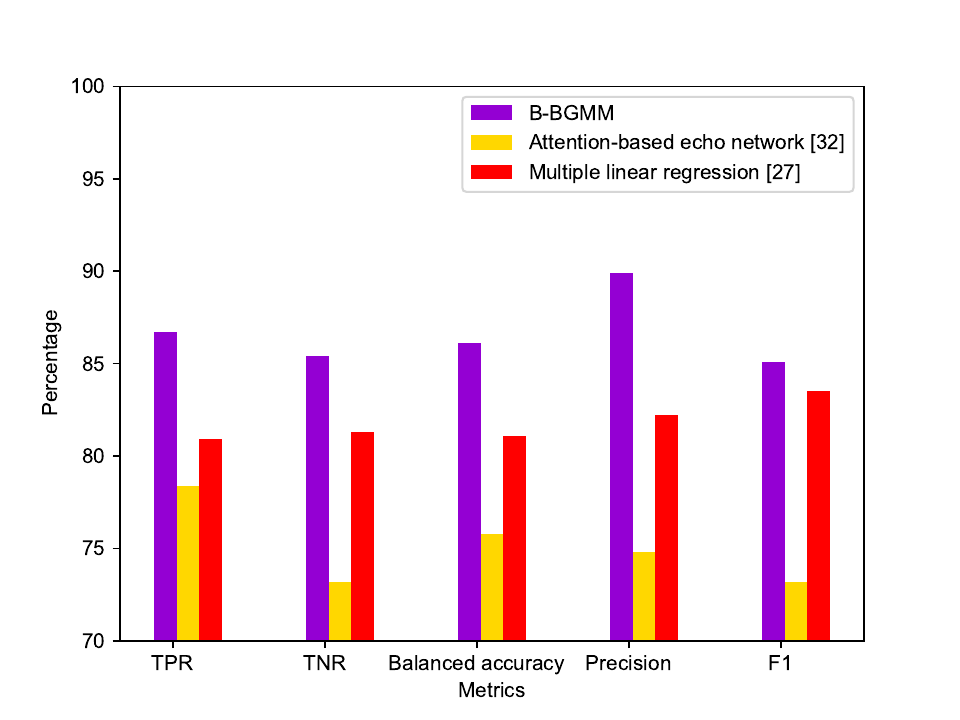}
		\caption[]%
		{{\small The 20-robot network case\xx{.}}}    
		\label{fig:ag20}
	\end{subfigure}
	\caption[Sample contours of embeddings]
	{\small Comparative performance results of our proposed B-BGMM-based model, the attention-based echo network, and the multiple-linear-regression-based model\xx{.}} 
	\label{fig:ag}
\end{figure*}
\begin{table}
	\centering\caption{Logloss nominal performance report of the proposed strategy compared to the attention-based echo network model and multiple linear regression model}
	\begin{tabular}{ccccccc}
		\toprule  
		&\multicolumn{3}{c}{10-robot network}&\multicolumn{3}{c}{20-robot network}\\
		\cmidrule(lr){2-4}\cmidrule(lr){5-7}
		& \textbf{B-BGMM} & AEN& MLR & \textbf{B-BGMM} & AEN& MLR\\
		\cmidrule(lr){1-1}\cmidrule(lr){2-2}	\cmidrule(lr){3-3}\cmidrule(lr){4-4}\cmidrule(lr){5-5}\cmidrule(lr){6-6}\cmidrule(lr){7-7}	
		Positive-class average logloss&\textbf{0.103}&0.342&0.255&\textbf{0.123}&0.418&0.309\\ 
		Negative-class average logloss&\textbf{0.106}&0.338&0.249&\textbf{0.119}&0.458&0.312\\\cmidrule{1-7}
		Total average logloss&\textbf{0.105}&0.340&0.252&\textbf{0.121}&0.438&0.311\\
		\bottomrule		
	\end{tabular}
	\label{tbl:logloss}
\end{table}
\begin{table}
	\centering\caption{Logloss noisy performance report of the proposed strategy compared to the attention-based echo network model and multiple linear regression model}
	\begin{tabular}{ccccccc}
		\toprule  
		&\multicolumn{3}{c}{10-robot network}&\multicolumn{3}{c}{20-robot network}\\
		\cmidrule(lr){2-4}\cmidrule(lr){5-7}
		& \textbf{B-BGMM} & AEN& MLR & \textbf{B-BGMM} & AEN& MLR\\
		\cmidrule(lr){1-1}\cmidrule(lr){2-2}	\cmidrule(lr){3-3}\cmidrule(lr){4-4}\cmidrule(lr){5-5}\cmidrule(lr){6-6}\cmidrule(lr){7-7}	
		Positive-class average logloss&\textbf{0.108}&0.351&0.262&\textbf{0.130}&0.429&0.314\\ 
		Negative-class average logloss&\textbf{0.110}&0.346&0.252&\textbf{0.126}&0.466&0.320\\\cmidrule{1-7}
		Total average logloss&\textbf{0.109}&0.349&0.257&\textbf{0.128}&0.447&0.317\\
		\bottomrule		
	\end{tabular}
	\label{tbl:loglossNoise}
\end{table}
The performance of the binary classification problem of ours, with respect to both 10-robot and 20-robot networks, \xx{is} reported in confusion matrices \xx{in} Figure \ref{fig:cm}. In particular, a template confusion matrix, as a guideline to read further matrices of this research, is depicted in Figure \ref{fig:cm-temp}. The entries of this template matrix are defined based on the notions below.
\begin{itemize}
	\item A true positive (TP) is a network topology that is predicted
	to be recoverable, and it is actually
	recoverable.
	\item A false positive (FP) is a network topology that is predicted
	to be recoverable, but it is actually irrecoverable.
	\item A true negative (TN) is a network topology that is predicted
	to be irrecoverable, and it cannot be recovered.
	\item A false negative (FN) is a network topology that is predicted
	to be irrecoverable, despite its recoverability.
\end{itemize}
In this regard, standard ratios of the above notions, i.e., TPR, TNR, FPR, FNR, constitute the entries of a confusion matrix. Figure\xx{s} \ref{fig:cm_10} and \ref{fig:cm_20} render the classification performance of our approach with respect to the 10-robot and 20-robot networks, respectively. One observes that both TPR and TNR factors are efficiently high, exhibiting the effective classifications realized by our strategy. Table \ref{tbl:res} also states the values corresponding to balanced accuracy\footnote{Balanced accuracy, as the	average of TPR and TNR, measures the predictive efficiency of a binary classification solution.}, precision, and F1 score\footnote{\xx{F1 score is the harmonic mean of precision and recall.}} of the classification results for both networks.

The embedding space of a B-BGMM is a mean of probabilistic inference about the topology (ir)recoverability of a fault. Contours of such embeddings convey interesting information about their underlying PDFs. For example, Figure \ref{fig:contours} includes the embedding contours corresponding to four different classification scenarios associated with the 20-robot network. The visual inspection of the distances in its sub-figures provides sufficient intuition about how the topology (ir)recoverability prediction problem may be solved by embedding data into B-BGMMs and inferring their probabilistic dynamics in view of a network topology. One may note that the contours of Figure \ref{fig:contours} are two-dimensional because of the fact that we used B-BGMMs as the computational kernel of our solution. If one uses multivariate BGMMs, then the resulting hyperplanes may include more potential solutions for post-fault predictions. However, as already discussed, it will be yielded at the cost of increasing the learning time of the algorithm that may not be favorable particularly in the case of large-scale ad-hoc robot networks. 

\zzz{To illustrate the advantages and merit of our B-BGMM-based model, we compare the results above to the results of two data-driven methods that are among the best for topology prediction, i.e., attention-based echo network \cite{liu2019attention} and multiple linear regression \cite{das2017fault}, applied to the same setting of the 10-\xx{robot} and 20-robot networks above. First we report the setup of the models synthesized based on the two cited methods. Attention-based echo network essentially employs a ridge regression engine with recurrent feedback to its hidden layers. Its updating process follows the rules $h(t+1) = (1-\kappa)\phi(W_{\text{in}}u(t+1)+W_{\text{in}}h(t)+W_{\text{back}}\hat{y}(t)+\tau)+\kappa h(t)$ and $y(t+1) = W_{\text{out}}(h(t+1);u(t+1))$.	Here, $\tau$ is an \xx{independent and identically distributed} noise, $u(t)$ is the observed topology as the input for the network at time $t$; $y$ and $\hat{y}$ are the post-fault predicted and target (ir)recoverabilities of the input topology; $h(t)$ is the current state of the network before a fault occurrence. $W_{\text{in}}$ is a randomly-initialized weight matrix; $W_{\text{out}}$ is the weight matrix determined by the ridge regression training phase; $W_{\text{back}}$ denotes the matrix of feedback weights; $\phi(\cdot)$ is a sigmoid activation function; and $\kappa$ is a leaky rate that is set to $0.1$. Taking mean square error as the training metric, we split the dataset into three partitions, say, 80\% for training, 10\% for validation, and 10\% for testing. We also perform $k$-fold Monte Carlo cross-validation where $k = 10$. We train the kernel of the model using ridge regression function of Scikit-Learn library \cite{pedregosa2011scikit}. We choose fit intercept option for estimation purposes. Since the base coordinate has to be similar to those of the previous tests, we confine the coordinates to be positive. We also set normalization flag to true so that distances from the base coordinate do not enforce any bias on the training process. We also set $\mathcal{L}_{2}$ norm to $1.5$ to achieve better regularization.}

\zzz{The multiple-linear-regression-based model \cite{das2017fault}, which was primarily proposed to be used for preserving connectivity for underwater networks, inherently works with 3D data. To adopt it to be applicable to our 2D tests, a pre-processing step is taken into account to convert our data to the desired geographical format of that network. Then, we seek the regression dependence of post-fault connectivity flag vector $A$, each component of which associates with one of our simulated robots, on geographical location vector $B$. Thus, for $n$ observations, the regression model reads as $A = \theta_{0} + \theta_{1}B_{1} + \cdots + \theta_{n}B_{n}$, where vector $\theta$ represents regression co-efficients. In the training phase, we use instances of the linear regression utility of Scikit-Learn library. The setup of fit intercept, base coordinate, and normalization flag resembles that of the attention-based echo network above. Mean square error is also the metric of interest in this case.}

\zzz{As a result, Figure \ref{fig:ag} illustrates the superior performance of the proposed B-BGMM-based model of ours corresponding to both the 10-robot (see, Figure \ref{fig:ag10}) and the 20-robot (see, Figure \ref{fig:ag20}) networks with respect to all of the investigated metrics.} \yy{
	An advantage of the proposed B-BGMM-based method over the attention-based echo network \cite{liu2019attention} is its unsupervised nature. In particular, our B-BGMM model benefits from lazy evaluations which make it a faster decision maker compared to that of the attention-based echo network model. Another feature of the B-BGMM-driven scheme is that it transforms local data to nonlinear topological representations in Gaussian spaces. Since topology of a network is essentially a geometrical notion, this feature yields better results compared to the multiple linear regression \cite{das2017fault} in discovering more complex patterns of topology (ir)recoverability in faulty ad-hoc robot networks.}

\yy{The performance superiority of the proposed scheme, compared to the baseline models, may also be observed according to the logloss report depicted in Table \ref{tbl:logloss} for both 10-robot and 20-robot networks. Another advantage of our model is its robustness in presence of noisy data. For this purpose, we perturbed the position $q$ of each robot by addition of some white noise according to the rule 
	\begin{equation*}
		q \rightarrow q+\text{sample}(\mathcal{U}[q-0.1q, q+0.1q]), 
	\end{equation*} 
	where $\mathcal{U}(\cdot, \cdot)$ denotes a uniform distribution bounded by its arguments. As Table \ref{tbl:loglossNoise} illustrates, logloss values corresponding to the B-BGMM model remain close to the logloss values associated with the nominal case, represented in Table \ref{tbl:logloss}.}
\section{Concluding Remarks}
\label{sec:conc}
When some faults occur in a subset of robots in an ad-hoc robot network, its nominal topology may be perturbed in view of feasibility of communications. \xx{Hence}, one seeks the prediction of topology (ir)recoverability for the post-fault network. This paper solves this problem based on a data-driven approach. A double-pathway predictor, governed by a Bayesian inference engine, is developed that processes the probability density functions associated with bivariate Bayesian Gaussian distributions (B-BGMMs) of the network's topological embeddings. The efficiency of the obtained numerical results manifests the success of our model to hit that mark. 

\xx{This research, despite its success in topology (ir)recoverability prediction, may lead to the emergence of further questions to even improve its formalism and results to a greater extent as follows.  (i) We used B-BGMMs for computational reasons as well as binary relations between orphan robots and their neighbors. However, alternative partial or total embodiments of multivariate BGMMs may reveal more unknown predictive capabilities for this model. Furthermore, one may intend to establish some meta learning approaches for the decision-making step of the strategy to (at least partially) relax its current set of hyperparameters. (2) Our strategy cannot be applied to split-and-merge maneuvers of ad-hoc robot networks when such splits are too far from each other so that they cannot be linked to each other by at least two of their robots. That is because the dataset associated with a total network has to be partitioned to various disjoint subsets from one scenario to another. This remarkably reduces the number of samples and their correlations in a scenario. As a result, any learning process becomes extremely inefficient, if not totally infeasible. The interpretation of a split-and-merge scenario in view of our approach is the irrecoverability of its corresponding pre-fault topology. A generalization of our strategy to cover such split-and-merge maneuvers is another potential continuation of this work.} \yy{(3) The distribution of robot locations in our simulations is uniform. For sufficiently-small values associated with connectivity threshold $\delta$, one or more robots may act as a hub connected to many other robots, similarly to power-law-driven networks. So, the reported results are not sensitive in that regard. However, in the investigated 20-robot network, the largest network in our tests, the severity of the quoted power-law-driven centrality is not noticeable. Moving toward expansion of our method to larger networks, while maintaining high accuracies, is another venue for future research.} \qq{(4) Finally, further experiments may be conducted to validate the robustness of the proposal across diverse topological configurations and failure/attack strategies.}
\section*{Acknowledgement}
The authors appreciate the constructive comments of anonymous reviewers which led to this improved exposition.
\bibliographystyle{IEEEtran} 
\bibliography{references}

\begin{thebibliography}{10}
\providecommand{\url}[1]{#1}
\csname url@samestyle\endcsname
\providecommand{\newblock}{\relax}
\providecommand{\bibinfo}[2]{#2}
\providecommand{\BIBentrySTDinterwordspacing}{\spaceskip=0pt\relax}
\providecommand{\BIBentryALTinterwordstretchfactor}{4}
\providecommand{\BIBentryALTinterwordspacing}{\spaceskip=\fontdimen2\font plus
\BIBentryALTinterwordstretchfactor\fontdimen3\font minus
  \fontdimen4\font\relax}
\providecommand{\BIBforeignlanguage}[2]{{%
\expandafter\ifx\csname l@#1\endcsname\relax
\typeout{** WARNING: IEEEtran.bst: No hyphenation pattern has been}%
\typeout{** loaded for the language `#1'. Using the pattern for}%
\typeout{** the default language instead.}%
\else
\language=\csname l@#1\endcsname
\fi
#2}}
\providecommand{\BIBdecl}{\relax}
\BIBdecl

\bibitem{ghedini2018toward}
C.~Ghedini, C.~H. Ribeiro, and L.~Sabattini, ``Toward efficient adaptive ad-hoc
  multi-robot network topologies,'' \emph{Ad Hoc Networks}, vol.~74, pp.
  57--70, 2018.

\bibitem{li2012servicing}
X.~Li, I.~Lille, R.~Falcon, A.~Nayak, and I.~Stojmenovic, ``Servicing wireless
  sensor networks by mobile robots,'' \emph{IEEE Communications Magazine},
  vol.~50, no.~7, pp. 147--154, 2012.

\bibitem{birk2009networking}
A.~Birk, S.~Schwertfeger, and K.~Pathak, ``A networking framework for
  teleoperation in safety, security, and rescue robotics,'' \emph{IEEE Wireless
  Communications}, vol.~16, no.~1, pp. 6--13, 2009.

\bibitem{bernard2011autonomous}
M.~Bernard, K.~Kondak, I.~Maza, and A.~Ollero, ``Autonomous transportation and
  deployment with aerial robots for search and rescue missions,'' \emph{Journal
  of Field Robotics}, vol.~28, no.~6, pp. 914--931, 2011.

\bibitem{chen2019topology}
J.~Chen and Z.~Wang, ``Topology control in hybrid {VLC}/{RF} vehicular ad-hoc
  network,'' \emph{IEEE Transactions on Wireless Communications}, vol.~19,
  no.~3, pp. 1965--1976, 2019.

\bibitem{tardioli2010enforcing}
D.~Tardioli, A.~R. Mosteo, L.~Riazuelo, J.~L. Villarroel, and L.~Montano,
  ``Enforcing network connectivity in robot team missions,'' \emph{The
  International Journal of Robotics Research}, vol.~29, no.~4, pp. 460--480,
  2010.

\bibitem{santra2013study}
S.~Santra and P.~P. Acharjya, ``A study and analysis on computer network
  topology for data communication,'' \emph{International Journal of Emerging
  Technology and Advanced Engineering}, vol.~3, no.~1, pp. 522--525, 2013.

\bibitem{macktoobian2023learning}
M.~Macktoobian, Z.~Shu, and Q.~Zhao, ``Learning optimal topology for ad-hoc
  robot networks,'' \emph{IEEE Robotics and Automation Letters}, vol.~8, no.~4,
  pp. 2181--2188, 2023.

\bibitem{mavrogiannis2021hamiltonian}
C.~Mavrogiannis and R.~A. Knepper, ``Hamiltonian coordination primitives for
  decentralized multiagent navigation,'' \emph{The International Journal of
  Robotics Research}, vol.~40, no. 10-11, pp. 1234--1254, 2021.

\bibitem{ghosh2022cognitive}
S.~Ghosh, T.~Dagiuklas, M.~Iqbal, and X.~Wang, ``A cognitive routing framework
  for reliable communication in {IoT} for industry 5.0,'' \emph{IEEE
  Transactions on Industrial Informatics}, vol.~18, no.~8, pp. 5446--5457,
  2022.

\bibitem{zhang2000intrusion}
Y.~Zhang and W.~Lee, ``Intrusion detection in wireless ad-hoc networks,'' in
  \emph{Proceedings of the 6th Annual International Conference on Mobile
  Computing and Networking}, 2000, pp. 275--283.

\bibitem{sarika2016security}
S.~Sarika, A.~Pravin, A.~Vijayakumar, and K.~Selvamani, ``Security issues in
  mobile ad hoc networks,'' \emph{Procedia Computer Science}, vol.~92, pp.
  329--335, 2016.

\bibitem{ponnusamy2022intrusion}
V.~Ponnusamy, M.~Humayun, N.~Jhanjhi, A.~Yichiet, and M.~F. Almufareh,
  ``Intrusion detection systems in internet of things and mobile ad-hoc
  networks.'' \emph{Computer Systems Science \& Engineering}, vol.~40, no.~3,
  2022.

\bibitem{wattenhofer2004xtc}
R.~Wattenhofer and A.~Zollinger, ``{XTC}: A practical topology control
  algorithm for ad-hoc networks,'' in \emph{18th International Parallel and
  Distributed Processing Symposium}.\hskip 1em plus 0.5em minus 0.4em\relax
  IEEE, 2004, p. 216.

\bibitem{shen2004cltc}
C.-C. Shen \emph{et~al.}, ``Cltc: a cluster-based topology control for ad hoc
  networks,'' \emph{IEEE Transactions on Mobile Computing}, vol.~3, no.~1, pp.
  18--32, 2004.

\bibitem{bilen2022three}
T.~Bilen and B.~Canberk, ``Three-phased clustered topology formation for
  aeronautical ad-hoc networks,'' \emph{Pervasive and Mobile Computing},
  vol.~79, p. 101513, 2022.

\bibitem{zhou2020new}
Y.~Zhou \emph{et~al.}, ``A new data-driven topology optimization framework for
  structural optimization,'' \emph{Computers \& Structures}, vol. 239, p.
  106310, 2020.

\bibitem{leitner2009multi}
J.~Leitner, ``Multi-robot cooperation in space: A survey,'' \emph{2009 Advanced
  Technologies for Enhanced Quality of Life}, pp. 144--151, 2009.

\bibitem{queralta2020collaborative}
J.~P. Queralta \emph{et~al.}, ``Collaborative multi-robot search and rescue:
  Planning, coordination, perception, and active vision,'' \emph{IEEE Access},
  vol.~8, pp. 191\,617--191\,643, 2020.

\bibitem{couceiro2019semfire}
M.~S. Couceiro, D.~Portugal, J.~F. Ferreira, and R.~P. Rocha, ``Semfire:
  Towards a new generation of forestry maintenance multi-robot systems,'' in
  \emph{IEEE/SICE International Symposium on System Integration (SII)}.\hskip
  1em plus 0.5em minus 0.4em\relax IEEE, 2019, pp. 270--276.

\bibitem{wang2019hybrid}
H.~Wang, M.-J. Peng, J.~W. Hines, G.-Y. Zheng, Y.-K. Liu, and B.~R. Upadhyaya,
  ``A hybrid fault diagnosis methodology with support vector machine and
  improved particle swarm optimization for nuclear power plants,'' \emph{ISA
  Transactions}, vol.~95, pp. 358--371, 2019.

\bibitem{trenkwalder2020swarmcom}
S.~M. Trenkwalder, I.~Esnaola, Y.~Kaszubowski~Lopes, A.~Kolling, and
  R.~Gro{\ss}, ``Swarmcom: an infra-red-based mobile ad-hoc network for
  severely constrained robots,'' \emph{Autonomous Robots}, vol.~44, no.~1, pp.
  93--114, 2020.

\bibitem{brossard2020ai}
M.~Brossard, A.~Barrau, and S.~Bonnabel, ``{AI}-{IMU} dead-reckoning,''
  \emph{IEEE Transactions on Intelligent Vehicles}, vol.~5, no.~4, pp.
  585--595, 2020.

\bibitem{kegeleirs2021swarm}
M.~Kegeleirs, G.~Grisetti, and M.~Birattari, ``Swarm {SLAM}: Challenges and
  perspectives,'' \emph{Frontiers in Robotics and AI}, vol.~8, p. 618268, 2021.

\bibitem{prasanth2021certain}
A.~Prasanth, ``Certain investigations on energy-efficient fault detection and
  recovery management in underwater wireless sensor networks,'' \emph{Journal
  of Circuits, Systems and Computers}, vol.~30, no.~08, p. 2150137, 2021.

\bibitem{codetta2014dynamic}
D.~Codetta-Raiteri and L.~Portinale, ``Dynamic bayesian networks for fault
  detection, identification, and recovery in autonomous spacecraft,''
  \emph{IEEE Transactions on Systems, Man, and Cybernetics: Systems}, vol.~45,
  no.~1, pp. 13--24, 2014.

\bibitem{das2017fault}
A.~P. Das and S.~M. Thampi, ``Fault-resilient localization for underwater
  sensor networks,'' \emph{Ad Hoc Networks}, vol.~55, pp. 132--142, 2017.

\bibitem{harold2019psoblap}
Y.~Harold~Robinson, S.~Balaji, and E.~Golden~Julie, ``{PSOBLAP}: particle swarm
  optimization-based bandwidth and link availability prediction algorithm for
  multipath routing in mobile ad hoc networks,'' \emph{Wireless Personal
  Communications}, vol. 106, no.~4, pp. 2261--2289, 2019.

\bibitem{wu2020approach}
Q.~Wu, K.~Ding, and B.~Huang, ``Approach for fault prognosis using recurrent
  neural network,'' \emph{Journal of Intelligent Manufacturing}, vol.~31,
  no.~7, pp. 1621--1633, 2020.

\bibitem{jin2018fault}
Y.~Jin, C.~Shan, Y.~Wu, Y.~Xia, Y.~Zhang, and L.~Zeng, ``Fault diagnosis of
  hydraulic seal wear and internal leakage using wavelets and wavelet neural
  network,'' \emph{IEEE Transactions on Instrumentation and Measurement},
  vol.~68, no.~4, pp. 1026--1034, 2018.

\bibitem{xu2020fault}
Z.~Xu, C.~Li, and Y.~Yang, ``Fault diagnosis of rolling bearing of wind
  turbines based on the variational mode decomposition and deep convolutional
  neural networks,'' \emph{Applied Soft Computing}, vol.~95, p. 106515, 2020.

\bibitem{liu2019attention}
C.~Liu, R.~Yao, L.~Zhang, and Y.~Liao, ``Attention based echo state network: A
  novel approach for fault prognosis,'' in \emph{Proceedings of the 11th
  International Conference on Machine Learning and Computing}, 2019, pp.
  489--493.

\bibitem{yang2021fault}
Q.~Yang \emph{et~al.}, ``Fault prognosis of industrial robots in dynamic
  working regimes: Find degradation in variations,'' \emph{Measurement}, vol.
  173, p. 108545, 2021.

\bibitem{moore2001clustering}
A.~W. Moore, ``Clustering with {G}aussian mixtures,'' \emph{School of Computer
  Science, Carnegie Mellon University, www.cs. cmu.edu}, 2001.

\bibitem{roberts1998bayesian}
S.~J. Roberts, D.~Husmeier, I.~Rezek, and W.~Penny, ``{B}ayesian approaches to
  {G}aussian mixture modeling,'' \emph{IEEE Transactions on Pattern Analysis
  and Machine Intelligence}, vol.~20, no.~11, pp. 1133--1142, 1998.

\bibitem{mosallam2016data}
A.~Mosallam, K.~Medjaher, and N.~Zerhouni, ``Data-driven prognostic method
  based on bayesian approaches for direct remaining useful life prediction,''
  \emph{Journal of Intelligent Manufacturing}, vol.~27, pp. 1037--1048, 2016.

\bibitem{safarinejadian2010distributed}
B.~Safarinejadian, M.~B. Menhaj, and M.~Karrari, ``Distributed variational
  {B}ayesian algorithms for {G}aussian mixtures in sensor networks,''
  \emph{Signal Processing}, vol.~90, no.~4, pp. 1197--1208, 2010.

\bibitem{yin2001bayesian}
H.~Yin and N.~Allinson, ``Bayesian self-organising map for {G}aussian
  mixtures,'' \emph{IEE Proceedings-Vision, Image and Signal Processing}, vol.
  148, no.~4, pp. 234--240, 2001.

\bibitem{plataniotis2017gaussian}
K.~N. Plataniotis and D.~Hatzinakos, ``Gaussian mixtures and their applications
  to signal processing,'' in \emph{Advanced Signal Processing Handbook}.\hskip
  1em plus 0.5em minus 0.4em\relax CRC Press, 2017, pp. 89--124.

\bibitem{wang2021review}
C.~Wang, B.~Liu, L.~Liu, Y.~Zhu, J.~Hou, P.~Liu, and X.~Li, ``A review of deep
  learning used in the hyperspectral image analysis for agriculture,''
  \emph{Artificial Intelligence Review}, vol.~54, no.~7, pp. 5205--5253, 2021.

\bibitem{pignat2019bayesian}
E.~Pignat and S.~Calinon, ``{Bayesian} {G}aussian mixture model for robotic
  policy imitation,'' \emph{IEEE Robotics and Automation Letters}, vol.~4,
  no.~4, pp. 4452--4458, 2019.

\bibitem{li2021convolutional}
G.~Li, J.~Wu, C.~Deng, Z.~Chen, and X.~Shao, ``Convolutional neural
  network-based bayesian gaussian mixture for intelligent fault diagnosis of
  rotating machinery,'' \emph{IEEE Transactions on Instrumentation and
  Measurement}, vol.~70, pp. 1--10, 2021.

\bibitem{ramanathan2002brief}
R.~Ramanathan and J.~Redi, ``A brief overview of ad hoc networks: challenges
  and directions,'' \emph{IEEE Communications Magazine}, vol.~40, no.~5, pp.
  20--22, 2002.

\bibitem{khateri2019comparison}
K.~Khateri, M.~Pourgholi, M.~Montazeri, and L.~Sabattini, ``A comparison
  between decentralized local and global methods for connectivity maintenance
  of multi-robot networks,'' \emph{IEEE Robotics and Automation Letters},
  vol.~4, no.~2, pp. 633--640, 2019.

\bibitem{wilson2020bayesian}
A.~G. Wilson and P.~Izmailov, ``{Bayesian} deep learning and a probabilistic
  perspective of generalization,'' \emph{Advances in Neural Information
  Processing Systems}, vol.~33, pp. 4697--4708, 2020.

\bibitem{griffin2011stick}
J.~E. Griffin and M.~F. Steel, ``Stick-breaking autoregressive processes,''
  \emph{Journal of Econometrics}, vol. 162, no.~2, pp. 383--396, 2011.

\bibitem{dunson2008kernel}
D.~B. Dunson and J.-H. Park, ``Kernel stick-breaking processes,''
  \emph{Biometrika}, vol.~95, no.~2, pp. 307--323, 2008.

\bibitem{li2008simultaneous}
Y.~Li, M.~Dong, and J.~Hua, ``Simultaneous localized feature selection and
  model detection for {G}aussian mixtures,'' \emph{IEEE Transactions on Pattern
  Analysis and Machine Intelligence}, vol.~31, no.~5, pp. 953--960, 2008.

\bibitem{svensen2005robust}
M.~Svensen and C.~M. Bishop, ``Robust {B}ayesian mixture modelling,''
  \emph{Neurocomputing}, vol.~64, pp. 235--252, 2005.

\bibitem{chen2007congestion}
X.~Chen, H.~M. Jones, and A.~D.~S. Jayalath, ``Congestion-aware routing
  protocol for mobile ad hoc networks,'' in \emph{66th Vehicular Technology
  Conference}.\hskip 1em plus 0.5em minus 0.4em\relax IEEE, 2007, pp. 21--25.

\bibitem{macktoobian2022meta}
M.~Macktoobian and G.~F. Duc, ``Meta navigation functions: Adaptive
  associations for coordination of multi-agent systems,'' in \emph{American
  Control Conference (ACC)}.\hskip 1em plus 0.5em minus 0.4em\relax IEEE, 2022,
  pp. 1921--1926.

\bibitem{schwarz1978estimating}
G.~Schwarz, ``Estimating the dimension of a model,'' \emph{The Annals of
  Statistics}, pp. 461--464, 1978.

\bibitem{pedregosa2011scikit}
F.~Pedregosa \emph{et~al.}, ``Scikit-learn: Machine learning in {P}ython,''
  \emph{The Journal of Machine Learning Research}, vol.~12, pp. 2825--2830,
  2011.

\end{thebibliography}
\end{document}